\newcommand{\subparagraph}{}
\newtcbox{\mymath}[1][]{%
    nobeforeafter, math upper, tcbox raise base,
    enhanced, colframe=blue!30!black,
    colback=gray!30, boxrule=1pt,
    #1}
\newcommand{\highest}[1]{\textcolor{black}{\mathbf{#1}}}
\definecolor{rulecolor}{RGB}{0,71,171}
\definecolor{tableheadcolor}{RGB}{204,229,255}
\newtheorem{theorem}{Theorem}
\newtheorem{definition}{Definition}
\newtheorem{proposition}{Proposition}
\newtheorem*{theorem*}{Theorem}
\newtheorem*{definition*}{Definition}
\newsavebox{\mybox}\newsavebox{\mysim}
\newcommand{\distras}[1]{%
  \savebox{\mybox}{\hbox{\kern3pt$\scriptstyle#1$\kern3pt}}%
  \savebox{\mysim}{\hbox{$\sim$}}%
  \mathbin{\overset{#1}{\kern\z@\resizebox{\wd\mybox}{\ht\mysim}{$\sim$}}}%
}
\definecolor{blue}{rgb}{0., 0., 0.}
\begin{document}

%
\title{VolterraNet: A higher order convolutional network with group equivariance for homogeneous manifolds}
%
%
%
%

\author{Monami Banerjee$^\dagger$, Rudrasis~Chakraborty$^\dagger$, Jose Bouza,  and~Baba~C.~Vemuri,~\IEEEmembership{Fellow,~IEEE}
\IEEEcompsocitemizethanks{ \IEEEcompsocthanksitem M. Banerjee is with FaceBook, CA,
R. Chakraborty is with University of California, Berkeley, USA. $^\dagger$ denotes equal contributions. \IEEEcompsocthanksitem J. Bouza and B. C. Vemuri are with University of Florida, Gainesville, FL, USA.\protect\\
E-mail: \{monamie.b, rudrasischa, josejbouza\}@gmail.com, vemuri@ufl.edu }
}

%
%


\markboth{ACCEPTED IN IEEE TRANSACTIONS ON PATTERN ANALYSIS AND MACHINE INTELLIGENCE,~VOL.~XXX, NO.~XXX, MONTH~YEAR}%
{Chakraborty \MakeLowercase{\textit{et al.}}: VolterraNet -- A Higher Order CNN for Riemannian Homogeneous Manifolds .}
%



\IEEEtitleabstractindextext{%

\begin{abstract}
Convolutional neural networks have been highly successful in image-based learning tasks due to their translation equivariance property.  Recent work has generalized the traditional convolutional layer of a convolutional neural network to non-Euclidean spaces and shown group equivariance of the generalized convolution operation. 
In this paper, we present a novel higher order Volterra convolutional neural network (VolterraNet) for data defined as samples of functions on Riemannian homogeneous spaces.
Analagous to the result for traditional convolutions, we prove that the Volterra functional convolutions are equivariant to the action of the isometry group admitted by the
Riemannian homogeneous spaces, and under some restrictions, any non-linear equivariant function can be expressed as our homogeneous space Volterra convolution, generalizing the non-linear shift equivariant characterization of Volterra expansions in Euclidean space.  We also prove that second order functional convolution operations can be represented as cascaded convolutions which leads to an efficient implementation. Beyond this, we also propose a dilated VolterraNet model. These advances lead to large parameter reductions relative to baseline non-Euclidean CNNs.

To demonstrate the efficacy of the VolterraNet performance, we present several real data experiments involving classification tasks on spherical-MNIST, atomic energy, Shrec17 data sets, and group testing on diffusion MRI data. Performance comparisons to the state-of-the-art are also presented. 

\end{abstract}

\begin{IEEEkeywords}
Homogeneous spaces, Volterra Series, Convolutions, Geometric Deep Learning, Equivariance
\end{IEEEkeywords}
}

\maketitle

\IEEEdisplaynontitleabstractindextext

%
\IEEEpeerreviewmaketitle

\section{Introduction}\label{intro}
CNNs were introduced in the 1990s by Lecun \cite{lecun1998gradient}
and gained enormous popularity in the past decade especially after the
demonstration of the significant success on Imagenet data by
Krizhevsky et al. \cite{krizhevsky2009learning}. At the heart of CNNs
success is its ability to learn a rich class of features from data
using a combination of convolutions and nonlinear operations such as
ReLU or softmax functions. The success of CNNs however is achieved at the
expense of a large number of parameters that need to be learned and
a computational burden in the training time. It is well
known now that a multi-layer perceptron can approximate any function
to the desired level of accuracy with a finite number of neurons in
the hidden layer. It is therefore natural to consider parameter efficiency as one of the network design goals to strive for in a deep netowrk.
The higher order Volterra series 
can capture a richer class of features and hence
significantly reduce the total number of parameters while maintaining comparable
or better classification accuracy relative to the baseline models. 



In computer vision and medical imaging, many applications deal with
data domains that are non-Euclidean. For instance, the n-sphere ($n
\geq 2$), the manifold of symmetric positive definite matrices, the
Grassmannian, Stiefel manifold, flag manifolds etc. Most of these manifolds belong to
the class of (Riemannian) homogeneous spaces (manifolds).
Thus,
our goals here are to 
{\color{blue}
\begin{enumerate*}
\item Introduce a principled framework for defining CNNs on general homogeneous Riemannian manifolds.  
\item Introduce a novel higher order
convolution layer using Volterra theory \cite{volterra2005theory} on homogeneous Riemannian manifolds which provides significant parameter-efficiency improvements for non-Euclidean CNNs. 
\item Establish empirical evidence demonstrating the applicability of our homogeneous Riemannian manifold CNNs and the performance boost provided by the Volterra convolutions. 
\end{enumerate*}
}

Much of the recent work in this problem domain has focused on
generalizing CNNs to homogeneous spaces by exploiting the weight 
sharing that the symmetries of the underlying manifold allow. The 2-sphere is
a particularly important example. 
In the recent past, CNNs have been reported in literature
\cite{Worrall16,Cohen-ICLR18,Cohen-ICML17} which are designed to handle
data that are samples of functions defined on a 2-sphere and hence are equivariant to 3D rotations
which are members of the $\textsf{SO}(3)$ group. The spherical
convolution\footnote{As has been pointed out several times in the literature,
the convolution operation in CNNs is actually a
correlation and not a convolution. Hence, in this
paper, {\it we will use the term convolution and correlation
  interchangeably but always imply correlation}.} network presented in \cite{Cohen-ICLR18,esteves2018learning} is named
Spherical CNN. Recently, Kondor et al. in \cite{kondor2018clebsch}
proposed the Clebsch-Gordan net by replacing the repeated forward and
backward Fourier transform operations used in
\cite{Cohen-ICLR18}. They showed that by using the Clebsch-Gordan
transform as the source of nonlinearity, better performance can be
achieved by avoiding the repetitive forward and inverse Fourier transform
operations. In \cite{esteves2017polar}, authors present
polar transformer networks, which are equivariant to rotations and
scaling transformations. By combining them with the spatial transformer
\cite{jaderberg2015spatial}, they achieved the required equivariance to
translations as well. Recently, the equivariance of convolutions to
more general classes of group actions has been
reported in literature
\cite{kondor2018generalization}, and later in \cite{cohen2018general}. In
\cite{esteves2018learning}, Esteves et al. used the correlation
defined in \cite{driscoll1994computing} to propose an $\textsf{SO}(3)$
equivariant operator and in turn define a spherical convolution. In
this paper, we will define correlations and the Volterra series on a homogeneous manifold and 
show that the equivariance property holds for both.

Volterra kernels were first proposed in image classification
literature in \cite{kumar2009volterrafaces,kumar2012trainable}. In
\cite{kumar2012trainable}, authors learn the kernels in a data driven
fashion and formulate the learning problem as a generalized
eigenvalue problem. Volterra theory of nonlinear systems was applied
more than two decades ago to a single hidden layer feed-forward neural
network with a linear output layer and a fully dynamic recurrent
network in \cite{hakim1991volterra}.  Most recent use of Volterra
kernels in deep networks was reported in \cite{zoumpourlis2017non},
where, authors presented a single layer of  Volterra kernel based
convolutions followed by conventional CNN layers. They however did not explore equivariance properties of the network or consider non-Euclidean input domains. 

In this paper, we define a Volterra kernel to replace traditional convolution kernels. We present a novel generalization of the convolution group-equivariance
property to higher order convolutions expressed using Volterra theory of functional convolution on non-Euclidean domains, specifically, the Riemannian homogenous 
spaces \cite{helgason1962differential} referred to earlier.
Most of these manifolds are commonly encountered in mathematical formulations of various computer vision tasks such as action recognition, covariance tracking etc., and in medical imaging for example, in diffusion magnetic resonance imaging (dMRI), elastography etc. By generalizing traditional CNNs in two possible ways, \begin{inparaenum} \item to cope with data domains that are non-Euclidean and \item to higher order convolutions expressed using Volterra series, we expect to extend the success of CNNs in yet unexplored ways. \end{inparaenum}

We begin with a significant extension of prior work by the authors of \cite{banerjee2019dmr},
where the authors defined a correlation operation for homogeneous manifolds. Specifically, our extension consists of a proof that not only is the correlation operation group equivariant, but additionally any linear group equivariant function can be written as a correlation on the manifold (Banerjee et al. \cite{banerjee2019dmr} only showed the first fact). 
We present experiments to demonstrate better performance of the proposed VolterraNet on spherical-MINST and the Shrec17 data with less number of parameters than previously shown in literature for the Spherical-CNN and the Clebsch-Gordan net. We then present a dilated convolution model based on the VolterraNet and demonstrate its efficacy in group testing on diffusion magnetic resonance data acquired from patients with movement disorders. The domain of this data is another example of a Riemannian homogeneous space.

In summary, our key contributions in this paper
are: \begin{inparaenum} 
\item A principled method for choice of basis in designing a deep network architecture on a Riemannian homogeneous manifold $\mathcal{M}$. 
\item A proof of a generalization of the classical linear shift invariance (in our terminology, equivariance) characterization theorem for correlation operations on Riemannian homogeneous manifolds. 
\item A novel generalization of convolution operations to higher order Volterra series on non-Euclidean domains specifically, Riemannian homogeneous manifolds which are often encountered both in computer vision and medical imaging applications. 
\item  A generalization of the classical non-linear shift invariance (in our terminology, equivariance) characterization theorem for  Volterra convolution operations on Riemannian homogeneous manifolds. 
\item Experiments on real data sets that are publicly
available such as the spherical-MNIST, atomic energy and
Shrec17. For these real data, we present comparisons to the
state-of-the-art methods. \item An extension of the VolterraNet to Dilated VolterraNet and demonstrate its efficiency via group testing on diffusion MRI brain scans from controls (normal subjects) and movement disorder patients. Further, ablation studies on VolterraNet to demonstrate the usefulness of the higher order convolution  operations. \end{inparaenum}
  
The rest of the paper is organized as follows: 
In section \ref{theory:correlation} we define the correlation operation on homogenous manifolds and prove a generalization of the Euclidean linear shift invariance (LSI) theorem for this correlation operation.
Then, in section \ref{theory:basis} we present a framework  for principled choice of basis in representing functions on a Riemmanian homogeneous manifold. In section \ref{theory:volterra}, we define the Volterra higher-order
convolution operation and prove a generalization of the non-linear shift invariance theorem for this Volterra operation. Following this, we present a detailed description of the proposed VolterraNet architecture in \ref{vol_architecture} and a description of the proposed dilated VolterraNet in \ref{dilated_theory}. 
Finally, section \ref{results} contains the experimental results and section \ref{conc} the conclusions.

\section{List of Notations}
We now summarize the list of notations that will be used throughout this paper.

{
    \centering
    \scalebox{0.85}{
    \begin{tabular}{|l|l|}
    \hline
        $\mathcal{M}$ & Riemannian homogeneous space (manifold) \\
        $G$ & a group \\
        $\textsf{SO}(n)$ &  n-dimensional special orthogonal group \\ 
        & of matrices \\
        $I(\mathcal{M})$ & Isometry group admitted by $\mathcal{M}$ \\
        $L^2(\mathcal{M}, \mathbf{R})$ & Space of real-valued square integrable \\
        & functions on $\mathcal{M}$\\
        $L^2(G, \mathbf{R})$ & Space of real-valued square integrable \\
        & functions on $G$\\
        $\omega^{\mathcal{M}}$ & Volume form of $\mathcal{M}$ \\
        $\mu_G$ & Haar measure of $G$ \\
        $g\cdot x/ L_g(x)$ & Action of $g\in G$ on $x\in \mathcal{M}$ \\
        $gh/ L_g(h)$ & Action of $g\in G$ on $h\in G$ \\
        $g\cdot f/ L^*_{g^{-1}}(f)$ & Action of $g\in G$ on $f:\mathcal{M}\rightarrow \mathbf{R}$ \\
        & and is given by $x \mapsto f(g^{-1}\cdot x)$\\
        $\mathbf{S}^n$ & $n$-sphere \\
        $\mathbf{R}^+$  & Space of positive reals \\
        $P_3$ & Space of $3\times 3$ symmetric \\ & positive-definite matrices \\
        $\textsf{GL}(n)$ & General linear group of $n\times n$ matrices \\
        $\textsf{O}(n)$ & Space of $n\times n$ orthogonal matrices \\
        $\mathbf{R}\setminus \left\{0\right\}$ & Space of reals without the origin \\
        $\text{Stab}(x)$ & Stabilizer of an element $x\in \mathcal{M}$ \\
        $g^{\mathcal{M}}$ & Riemannian metric on the manifold $\mathcal{M}$ \\
        $logm$ & Matrix log operation\\
      \hline
    \end{tabular}
    }
    }

\section{Correlation on Riemannian homogeneous spaces}
\label{theory:1}
In this section, we define a correlation operation which generalizes the
Euclidean convolution layer to arbitrary Riemannian homogeneous spaces. Further, we
prove a generalization to Riemannian homogeneous spaces of the linear shift-invariant system (LSI) characterization of Euclidean convolutions.  
Similar theorems were first proved in \cite{kondor2018generalization} and later in \cite{cohen2018general}.
This result is not meant to be novel but to motivate the analogous result for higher order convolutions on Riemannian homogeneous spaces that we prove subsequently.

\subsection{Background}
\label{theory:back}
We will briefly review the differential geometry of Riemannian homogeneous spaces from 
an informal perspective. Formal definitions will be deferred to the appendix for conceptual clarity. 

\vspace{0.1cm}
As mentioned earlier, Riemannian homogeneous spaces are Riemannian manifolds which `look' the
same locally at each point with respect to some symmetry group $G$, meaning that the action of
$G$ on $\mathcal{M}$ is transitive.  We will specifically consider Riemannian manifolds with a 
transitive action of the isometry group $I(\mathcal{M})$.  For the rest of the paper, we use $G = I(\mathcal{M})$ unless mentioned otherwise. For example, the $2$-sphere is a 
homogeneous space with $G = \textsf{SO}(3)$. An
important fact about homogeneous spaces is that they can be identified as a quotient space. 
In general, if $\mathcal{M}$ is a homogeneous space with group $G$ acting on it, and $H_x$ is some stabilizer (see definition in Appendix A) of
of a point $x \in \mathcal{M}$ then $\mathcal{M} \simeq G/H_x$. Returning to the $2$-sphere example, if we
take $H$ to be the stabilizer of the north pole, a subgroup of $\textsf{SO}(3)$ isomorphic to $\textsf{SO}(2)$, then
$G/H \simeq \textsf{SO}(3)/\textsf{SO}(2) \simeq \mathbf{S}^2$. 
For a detailed exposition on these concepts, we refer the reader to 
\cite{helgason1962differential}. 

\subsubsection{Assumptions}

For the remainder of this paper we assume $\mathcal{M}$ to be a Riemannian homogeneous space
admitting a transitive action of the group $G$, which we call the 
symmetries of $\mathcal{M}$. We also assume that $G$ is a locally compact topological group. 
Further we assume that any function $f : \mathcal{M} \to \mathbf{R}$ is square
integrable, i.e. $
    \int_\mathcal{M} |f(x)|^2 \omega^{\mathcal{M}}(x) < \infty$, 
where $\omega^{\mathcal{M}}$ is a suitable volume form on $\mathcal{M}$.  As mentioned before, we denote the space of square integrable functions on $\mathcal{M}$
by $L^2(\mathcal{M}, \mathbf{R})$. 
\subsection{Euclidean LSI Theorem}

We begin this section by recalling the Euclidean Linear Shift Invariant (LSI) theorem.

\begin{definition}
    Let $F : U \to V$ be a bounded linear operator between spaces $U$ and $V$ consisting of functions 
    $\mathbf{R}^n \to \mathbf{R}$. For $f \in U \cup V$ and $x \in \mathbf{R}^n$ we define $\tau_x(f)(z) = f(z-x)$.
    The set $\{\tau_x\}_{x \in \mathbf{R}^n}$ forms a group under composition. We
    say $F$ is translation equivariant (i.e. shift invariant in the traditional literature) if
    \[
        \tau_x(F(g)) = F(\tau_x(g))
    \]
    for all $g \in U$, $x \in \mathbf{R}^n$.
\end{definition}

\begin{theorem}
    Let $w : \mathbf{R}^n \to \mathbf{R}$ be a weight kernel, then the operator given as $G_w : U \to V$ is defined by $G_w(f) = f \star w$, 
    where $\star$ is the Euclidean convolution operation which 
    is a bounded, linear, and translation equivariant operator. Further, if $F$ is any bounded linear translation equivariant operator, then 
    there exists $w : \mathbf{R}^n \to \mathbf{R}$ such that $F = G_w$, i.e. $
        F(f) = f \star w 
    $, 
    for all $f \in U$.
\end{theorem}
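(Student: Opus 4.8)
The plan is to treat the two directions separately. For the forward direction, that $G_w(f) = f \star w$ is linear, bounded, and translation equivariant, I would verify each property directly. Linearity is immediate from the linearity of the defining integral. Boundedness follows from Young's convolution inequality, which yields a bound of the form $\|f \star w\| \le \|f\|\,\|w\|_1$ on the relevant spaces, so $G_w$ maps $U$ into $V$ continuously. For equivariance I would unfold $\tau_x(f \star w)(z) = (f \star w)(z-x) = \int f(y)\,w(z-x-y)\,dy$ and, separately, $\bigl((\tau_x f)\star w\bigr)(z) = \int f(y-x)\,w(z-y)\,dy$; the substitution $y \mapsto y+x$ in the second integral turns it into the first, giving $\tau_x(G_w(f)) = G_w(\tau_x f)$.

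The substantive content is the converse. The guiding idea is the \emph{impulse response}: formally every $f$ is a superposition of translated Dirac masses, $f(z) = \int f(x)\,\delta(z-x)\,dx = \int f(x)\,\tau_x(\delta)(z)\,dx$. Applying $F$ and pushing it through the integral by linearity and continuity gives $F(f)(z) = \int f(x)\,F(\tau_x\delta)(z)\,dx$. Translation equivariance then lets me replace $F(\tau_x\delta)$ by $\tau_x\bigl(F(\delta)\bigr)$, so that setting $w := F(\delta)$ produces exactly $F(f)(z) = \int f(x)\,w(z-x)\,dx = (f\star w)(z)$. This identifies the kernel as the response of $F$ to an impulse and recovers $F = G_w$.

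The main obstacle is that $\delta$ is not a genuine element of $U$, so the heuristic above must be made rigorous. I would do this through an approximate identity: choose a family $\{\phi_\epsilon\}$ with $\phi_\epsilon \to \delta$, set $w_\epsilon := F(\phi_\epsilon)$, and use the boundedness of $F$ together with the approximate-identity property to show that $F(f \star \phi_\epsilon) = f \star w_\epsilon$ converges, in the topology in which $F$ is assumed bounded, to a well-defined limit $w$ with $F(f) = f \star w$. A cleaner but heavier alternative is to pass to the Fourier transform, under which translation becomes modulation; equivariance forces $F$ to be diagonalized, so $\widehat{F(f)} = m\cdot\hat f$ for a multiplier $m$, whence $F = G_w$ with $\hat w = m$. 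Either way, the delicate step is the functional-analytic justification for exchanging $F$ with the integral and for the existence and integrability of the resulting $w$; this is essentially the Schwartz-kernel-theorem phenomenon and depends on exactly which function spaces $U$ and $V$ are and on the precise topology making $F$ bounded.
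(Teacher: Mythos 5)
A point of orientation first: the paper does not prove this theorem. It is stated in the section ``Euclidean LSI Theorem'' purely as recalled classical background, and the only proofs the paper actually supplies are for its homogeneous-space generalizations (the correlation equivariance theorem and its converse, proved in Appendix B, and the Volterra analogues). So there is no paper proof to compare yours against; your proposal must be judged on its own merits. Your forward direction is complete and correct: linearity is immediate, Young's inequality gives boundedness (assuming, as the paper implicitly does, an integrable kernel $w$), and your change of variables verifies equivariance.

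The converse is where a genuine gap lies, and you have put your finger on it without closing it --- because, as literally stated, it cannot be closed. Take $U = V = L^2(\mathbf{R}^n)$ and $F = \mathrm{id}$, which is bounded, linear, and translation equivariant. Your approximate-identity construction gives $w_\epsilon = F(\phi_\epsilon) = \phi_\epsilon$, which converges only to the Dirac distribution $\delta$, not to any function $w : \mathbf{R}^n \to \mathbf{R}$; indeed no integrable (or square-integrable) $w$ satisfies $f = f \star w$ for all $f$, since its Fourier multiplier would have to be identically $1$, contradicting the Riemann--Lebesgue lemma (resp.\ $\hat{w} \in L^2$). The same obstruction appears in your Fourier route: equivariance does diagonalize $F$ into a multiplier $m \in L^\infty$, but $m$ need not be the transform of any function. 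So the honest endpoint of your argument is that $F$ is convolution against a tempered distribution, not against a weight kernel in the stated sense. In other words, the delicate step you defer (``the existence and integrability of the resulting $w$'') is not a technicality awaiting a functional-analytic justification; it is false in the stated generality, and a correct write-up must either weaken the conclusion to distributional kernels or impose additional hypotheses on $F$ (or on $U$, $V$). To be fair, this is a defect of the theorem as the paper states it --- the informal engineering version of the LSI characterization --- rather than of your strategy, which is the standard and essentially correct one; but a complete proof must confront this point explicitly rather than leave it as a pending detail.
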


Thus, Euclidean convolutions with a weight kernel have an interesting and powerful characterization as linear shift invariant operators. Next we 
show that the correlation operation on any Riemannian homogeneous manifolds satisfy a generalization of the aforementioned LSI theorem. 

\subsection{Generalizing Convolutions and the LSI Theorem to Riemannian homogeneous spaces}
\label{theory:correlation}

We begin by defining the correlation operation for arbitrary homogeneous Riemannian manifolds. Some equivalent definitions have been made several times in
the literature, first for specific manifolds as in \cite{Cohen-ICLR18}, \cite{esteves2018learning}, then in more generality such as in
\cite{kondor2018generalization} and later in \cite{cohen2018general}. We then state a generalization of the LSI theorem for this correlation operation, which we call the Linear Group Equivariant (LGE) theorem. Note that similar theorems were first proved in \cite{kondor2018generalization} and later in \cite{cohen2018general}. We present this theorem not as a novel result, but as motivation for a non-linear version of the theorem which we will prove in section \ref{theory}. Regardless, we present a much simpler proof (compared to \cite{kondor2018generalization}, \cite{cohen2018general}) of the result in the appendix.

\begin{definition} [{\bf Correlation}]
\label{theory:def5} 
The correlation between $f : \mathcal{M} \to \mathbf{R}$ and $w : \mathcal{M} \to \mathbf{R}$ is
given by, $\left(f \star w\right) : G \rightarrow \mathbf{R}$ defined
as follows:
\begin{align}
\left(f \star w\right)\left(g\right) := \int_{\mathcal{M}} f(x)\left(g\cdot w\right)(x) \omega^{\mathcal{M}}(x)
\label{ManCorr}
\end{align}

The correlation between $f : G \to \mathbf{R}$ and $w : G \to \mathbf{R}$ is
given by, $\left(f \star w\right) : G \rightarrow \mathbf{R}$ defined
as follows:
\begin{align}
\left(f \star w\right)\left(g\right) := \int_{G} f(h) \left(g\cdot w \right)(h) \mu_G(h)
\label{GroupCorr}
\end{align}
where $\mu_G$ is the Haar measure on $G$ (which is guaranteed to exist based on our assumption that $G$ is a locally compact topological group). 
Please see the discussion at the end of this subsection for details. 
\end{definition}

Equation \ref{ManCorr} is described in words as follows: the weight kernel $w$ is ``shifted'' using the action of the symmetry group, and the point-wise product of the shifted weight kernel and the function $f$ is integrated over the manifold. A similar interpretation can be given to the correlation on groups in Eq.  \ref{GroupCorr}.  This generalizes the work 
on the $2$-sphere presented in \cite{Cohen-ICLR18}, \cite{esteves2018learning} for an arbitrary Riemannian homogeneous 
space $\mathcal{M}$. 

We show that this correlation operation is equivariant to the isometry group $G$ of the underlying homogeneous space 
$\mathcal{M}$. In order to state this theorem, we first formally define equivariance. 

\begin{definition} [{\bf Equivariance}]
\label{theory:def4}
Let $X$ and $Y$ be sets and $G$ be a group acting on $X$ and $Y$ (in literature these sets are termed as $G$ sets \cite{dummit2004abstract}). Then, $F: X \rightarrow Y$ is said to be
{\it equivariant} to the action of $G$ if
\begin{align}
F(g\cdot x) = g\cdot F(x)
\end{align}
for all $g \in G$ and all $x \in X$.
\end{definition}

We are now ready to state the following theorems: 
\begin{theorem} 
\label{theory:corr_eq}
Let $S$ and $U$ be $L^2(\mathcal{M}, \mathbf{R})$ or $L^2(G, \mathbf{R})$ and 
$F: S \rightarrow U$ be a function given by $f \mapsto \left(f \star w\right)$. Then $F$ is equivariant
with respect to the pullback action of $G$, i.e.
\[
    (\phi \cdot f) \star w = \phi \cdot (f \star w) 
\]
for $\phi \in G$ a symmetry of $\mathcal{M}$ where
\[
    \phi \cdot h \coloneqq h \circ \phi^{-1}
\]
for any $h : \mathcal{M} \to \mathbf{R}$ square-integrable. 
\end{theorem}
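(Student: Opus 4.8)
The plan is to verify the equivariance identity directly from the definition of the correlation in Equation \ref{ManCorr}, reducing everything to a change of variables under the group action together with the invariance of the volume form. I will treat the manifold case (Equation \ref{ManCorr}) explicitly; the group case (Equation \ref{GroupCorr}) is identical after replacing $\omega^{\mathcal{M}}$ with the Haar measure $\mu_G$ and invoking the left-invariance of $\mu_G$ in place of the isometry-invariance of $\omega^{\mathcal{M}}$.

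First I would expand the left-hand side using the definition of the pullback action $\phi \cdot f = f \circ \phi^{-1}$ and the definition of the correlation. Writing out $\bigl((\phi \cdot f) \star w\bigr)(g)$ gives an integral over $\mathcal{M}$ of $(\phi\cdot f)(x)\,(g\cdot w)(x)$, i.e. of $f(\phi^{-1}\cdot x)\,w(g^{-1}\cdot x)$, against $\omega^{\mathcal{M}}(x)$. The key step is then the substitution $y = \phi^{-1}\cdot x$, so that $x = \phi \cdot y$. Because $\phi \in G = I(\mathcal{M})$ acts by isometries, the volume form is invariant, $\omega^{\mathcal{M}}(\phi \cdot y) = \omega^{\mathcal{M}}(y)$, and the Jacobian factor is trivial; this is the crucial geometric fact that makes the argument work.

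After the substitution the integrand becomes $f(y)\,w\bigl(g^{-1}\cdot(\phi\cdot y)\bigr) = f(y)\,w\bigl((g^{-1}\phi)\cdot y\bigr)$, using that the action is a group action so $g^{-1}\cdot(\phi\cdot y) = (g^{-1}\phi)\cdot y = (\phi^{-1}g)^{-1}\cdot y$. Recognizing $w\bigl((\phi^{-1}g)^{-1}\cdot y\bigr) = \bigl((\phi^{-1}g)\cdot w\bigr)(y)$, the integral collapses exactly to $(f\star w)(\phi^{-1}g)$. Finally I would observe that $(f\star w)(\phi^{-1}g)$ is precisely the value of the pullback action of $\phi$ applied to the function $g \mapsto (f\star w)(g)$, i.e. $\bigl(\phi\cdot(f\star w)\bigr)(g)$, since the action on functions on $G$ is again given by left translation $h \mapsto h(\phi^{-1}\,\cdot\,)$. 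This yields the desired identity $(\phi\cdot f)\star w = \phi\cdot(f\star w)$.

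I expect the main obstacle to be bookkeeping rather than deep mathematics: one must keep the two distinct actions straight — the action on $f$ as a function on $\mathcal{M}$ versus the action on the output $f\star w$ as a function on $G$ — and confirm that the pullback convention $\phi\cdot h = h\circ\phi^{-1}$ interacts correctly with the associativity of the group action so that the $g^{-1}$ and $\phi^{-1}$ combine in the right order. The only genuine analytic input needed is the isometry-invariance of $\omega^{\mathcal{M}}$ (equivalently, left-invariance of $\mu_G$ for the group case), which justifies the measure-preserving change of variables; once that is granted the computation is a short and routine manipulation.
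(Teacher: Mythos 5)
Your proposal is correct and follows essentially the same route as the paper's proof: expand the definition of the correlation, perform the change of variables $y = \phi^{-1}\cdot x$, invoke the invariance of $\omega^{\mathcal{M}}$ (resp.\ $\mu_G$) under the isometry (resp.\ left-translation) action, and identify the resulting integral with $(f \star w)(\phi^{-1}g)$, i.e.\ the pullback action on the output. This is precisely the computation the paper carries out (in its $n$-th order form for Theorem~\ref{theory:thm3}, and in the appendix for the first-order case), so there is nothing to add.
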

\begin{proof}
    See appendix \hyperref[appendix:pfs]{B}.
\end{proof}

This constitutes the forward direction of the LGE theorem. Now we show the converse statement, namely, every linear group equivariant function is a correlation. 

\begin{theorem}
    Let $S$ and $U$ be $L^2(\mathcal{M}, \mathbf{R})$ or $L^2(G, \mathbf{R})$ and $F: S \rightarrow U$ 
    be a linear equivariant function with respect to the pullback action of $I(\mathcal{M})$. Then,
$\exists w \in S$ such that, $\left(F(f)\right)(g) = \left(f \star
w\right)(g)$, for all $f \in S$ and $g \in G$.
\end{theorem}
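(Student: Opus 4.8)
The plan is to mimic the classical argument that a bounded linear shift-invariant operator on Euclidean space is a convolution, replacing the translation group by the isometry group $G = I(\mathcal{M})$ and exploiting the fact that $G$ preserves the volume form $\omega^{\mathcal{M}}$. The strategy proceeds in three stages: first represent $F$ by an integral kernel, then use equivariance to pin down how that kernel depends on $g$, and finally read off the weight function $w$.

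First I would fix $g \in G$ and consider the linear functional $\ell_g : f \mapsto (F(f))(g)$ on $S$. Assuming $F$ is bounded (as in the Euclidean statement) and that evaluation-at-$g$ is continuous on the target—equivalently invoking the Schwartz kernel theorem, or working on a dense subspace of continuous functions and extending by continuity—the Riesz representation theorem supplies, for each $g$, a kernel $k_g \in S$ with
\[
    (F(f))(g) = \int_{\mathcal{M}} f(x)\, k_g(x)\, \omega^{\mathcal{M}}(x) = \langle f, k_g \rangle .
\]
Next I would feed the equivariance hypothesis into this representation. For $\phi \in G$, the pullback action gives $(F(\phi \cdot f))(g) = \langle \phi \cdot f, k_g \rangle$; substituting $y = \phi^{-1}\cdot x$ and using that $\phi$ is an isometry, so $\omega^{\mathcal{M}}$ is invariant, this becomes $\langle f, \phi^{-1}\cdot k_g \rangle$. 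On the other hand $(\phi \cdot F(f))(g) = (F(f))(\phi^{-1} g) = \langle f, k_{\phi^{-1} g}\rangle$. Since these agree for every $f \in S$, I conclude $k_{\phi^{-1} g} = \phi^{-1}\cdot k_g$, i.e. $k_{\psi g} = \psi \cdot k_g$ for all $\psi, g \in G$.

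Specializing to $g = e$ and setting $w := k_e \in S$, the relation reads $k_\psi = \psi \cdot w$, whence
\[
    (F(f))(g) = \langle f, k_g \rangle = \int_{\mathcal{M}} f(x)\,(g \cdot w)(x)\, \omega^{\mathcal{M}}(x) = (f \star w)(g),
\]
which is exactly the asserted correlation. The group case $S = L^2(G, \mathbf{R})$ is identical after replacing $\omega^{\mathcal{M}}$ by the Haar measure $\mu_G$ and invoking its left-invariance in the change of variables.

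I expect the main obstacle to be the first stage, namely rigorously producing the integral kernel $k_g$: pointwise evaluation is not continuous on $L^2$ equivalence classes, so the Riesz step must be justified either by assuming additional regularity of $F$ (for instance that it lands in continuous functions, or is Hilbert--Schmidt), or by appealing to the Schwartz kernel theorem and then arguing that the resulting distributional kernel is in fact represented by an $L^2$ function. Once the kernel exists, the manipulations in the second and third stages are routine, the only geometric input being the $G$-invariance of $\omega^{\mathcal{M}}$ guaranteed by the choice $G = I(\mathcal{M})$.
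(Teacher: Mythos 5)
Your proposal is correct and follows essentially the same route as the paper's appendix proof: represent the linear operator by an integral kernel (Riesz representation applied to each evaluation functional $f \mapsto (F(f))(g)$), use equivariance together with the $G$-invariance of the volume form (resp.\ Haar measure) to force the kernel relation $k_{\psi g} = \psi \cdot k_g$, and then read off $w = k_e$ so that $(F(f))(g) = \langle f, g\cdot w\rangle = (f \star w)(g)$. The analytic caveat you flag --- that pointwise evaluation on an $L^2$ target is not well-defined/continuous, so the kernel's existence requires boundedness or extra regularity of $F$ --- applies equally to the theorem as stated and proved in the paper, so it is a shared limitation rather than a defect of your argument.
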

\begin{proof}
    See appendix \hyperref[appendix:pfs]{B}.
\end{proof}

Together with these two theorems, we can generalize the LSI theorem to homogeneous spaces using the correlation defined in Def. \ref{theory:def5} . 
\newline
\newline
\fbox{\begin{minipage}{25em}
\textsc{A note on volume forms / measures}
\newline
\newline
In Def. \ref{theory:def5}, we specify the Haar measure for integration of a function on $G$. 
If $f$ and $w$ are functions on $G$, then the Haar measure $\mu_G$ has several desirable properties. For example, the Haar measure 
is invariant to ``translations'', i.e. if $S \subset G$ is measurable then $\mu_G(S) = \mu_G(gS)$ for any $g \in G$.
Further, using the Haar measure provides a convolution theorem which makes correlation a simple multiplication under the generalized Fourier transform for groups. 
This particular property is vital for efficient implementations. 

Note that on the other hand, we do not specify a specific volume form $\omega^\mathcal{M}$ for integration of function on $\mathcal{M}$ in definition \ref{theory:def5}. 
In many cases, the Haar measure on $G$ will induce a $G$-invariant volume form on $\mathcal{M} \simeq G/H$, but stating the exact conditions for this to be possible requires
some work. Instead, we define the correlation using an arbitrary volume form. In the next section we will give a construction which induces such a $G$-invariant volume form on $\mathcal{M}$.
\end{minipage}}

\subsection{Basis functions for $L^2$-functions on homogeneous spaces}
\label{theory:basis}
Our goal in this section is to induce a natural basis on $L^2 \left( \mathcal{M}, \mathbf{R} \right)$ 
from the canonical basis on $L^2 \left(G, \mathbf{R} \right)$ where
$G$ is the group acting on the homogeneous manifold $\mathcal{M}$. The basis on $G$ consists of matrix elements of irreducible 
unitary representations, which provides a Fourier transform on $G$ (for more details reader is referred to \cite{hewitt2012abstract}). 
We show that this construction matches the commonly used basis for specific manifolds, e.g. the spherical harmonics and Wigner-D functions in \cite{Cohen-ICLR18}.
This construction can be used to induce basis on arbitrary Riemannian homogeneous spaces.

\subsubsection{Basis on $L^2\left(\mathcal{M}, \mathbf{R}\right)$ induced from $L^2\left(G, \mathbf{R}\right)$}

 To induce a basis on $L^2 \left(\mathcal{M}, \mathbf{R} \right)$, we use the principal fiber bundle structure of the homogeneous manifold $\mathcal{M}$. 
A fiber bundle is a space that locally looks like a product space. It is expressed as a base space $B$ with the fibers making up a fiber space $F$, and their union being the 
total space denoted by $E$. There is a projection map $\pi : E \to B$ mapping fibers to their "base point" on $B$. A principal fiber $G$-bundle is a fiber bundle with a continuous (right) action of
a group $G$, such that the action of $G$ is free, transitive and preserves the fibers. For more details on fiber bundle theory see \cite{michor2008topics}. 

As mentioned in \ref{theory:back}, $\mathcal{M}$ can be identified with $G/H$ for $G$ the group action on $\mathcal{M}$ and $H$ the stabilizer of a point $x \in \mathcal{M}$,
usually called the "origin". It is well known that this identification induces a principal fiber $G$-bundle structure on $\mathcal{M}$ via the projection map. 

\begin{wrapfigure}{r}{4.4cm}
    \vspace{-2em}
    \begin{center}
      \includegraphics[scale=0.25]{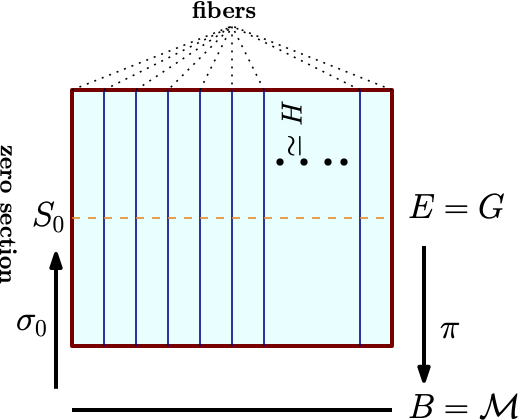}
\caption{\footnotesize Fiber bundle $(B, E, \pi)$.}\label{figbasis}
      \end{center}
      \vspace{-1.5em}
\end{wrapfigure}

\begin{proposition} \cite{helgason1962differential} 
\label{theory:prop3.5}
The homogeneous space, $\mathcal{M}$ identified as $G/H$ together with
the projection map $\pi: G \rightarrow G/H$ is a principal bundle with
$H$ as the fiber. Furthermore there exists a diffeomorphism $\psi:G/H \rightarrow \mathcal{M}$ given by $gH \mapsto g\cdot o$, where $o$ is the ``origin'' of $\mathcal{M}$.
\end{proposition}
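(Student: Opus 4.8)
The plan is to treat the two assertions with shared machinery, relying on the standard theory of quotients of a Lie group by a closed subgroup. First I would observe that the stabilizer $H = \text{Stab}(o)$ is a closed subgroup of $G$: it is the preimage of the single point $o$ under the continuous orbit map $\theta : G \to \mathcal{M}$, $g \mapsto g \cdot o$, and points are closed in the Hausdorff manifold $\mathcal{M}$. Since $G = I(\mathcal{M})$ is a Lie group (by the Myers--Steenrod theorem, the isometry group of a Riemannian manifold is a Lie group) and $H$ is closed in it, the quotient manifold theorem endows $G/H$ with a unique smooth structure for which the projection $\pi : G \to G/H$ is a smooth surjective submersion. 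This structural fact underlies everything that follows.

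Next I would construct $\psi$ and verify it is a diffeomorphism. The orbit map $\theta$ is smooth because the action is smooth, and it is constant on each left coset $gH$, since $h \in H$ fixes $o$ and hence $(gh) \cdot o = g \cdot (h \cdot o) = g \cdot o$. By the universal property of the submersion $\pi$, $\theta$ descends to a unique smooth map $\psi : G/H \to \mathcal{M}$ with $\psi(gH) = g \cdot o$; the same coset computation shows $\psi$ is well defined. Transitivity of the action gives surjectivity, while injectivity follows because $g \cdot o = g' \cdot o$ forces $g^{-1} g' \in \text{Stab}(o) = H$, i.e. $gH = g'H$. To upgrade this smooth bijection to a diffeomorphism I would invoke the equivariant rank theorem: a $G$-equivariant smooth map between homogeneous $G$-spaces has constant rank, so the bijective $\psi$ is simultaneously an immersion and a submersion, and is therefore a diffeomorphism (properness of the $I(\mathcal{M})$-action guarantees the orbit is genuinely the whole of $\mathcal{M}$ with the correct topology).

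For the principal-bundle claim I would exhibit the defining data. The group $H$ acts on the total space $G$ on the right by $g \cdot h := gh$; this action is smooth, free (since $gh = g$ implies $h = e$), and its orbits are precisely the fibers $\pi^{-1}(gH) = gH$, so $H$ is the fiber as claimed. It remains to establish local triviality. Because $\pi$ is a submersion, about every point of $G/H$ there is an open neighborhood $U$ admitting a smooth local section $s : U \to G$ with $\pi \circ s = \mathrm{id}_U$, obtained from a slice via the implicit function theorem. Such a section yields
\[
    \Phi : U \times H \to \pi^{-1}(U), \qquad (u, h) \mapsto s(u)\, h,
\]
a diffeomorphism intertwining the right $H$-action, with inverse $g \mapsto \bigl(\pi(g),\, s(\pi(g))^{-1} g\bigr)$. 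This verifies the local product structure and completes the principal $H$-bundle claim.

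The main obstacle is the analytic content: the passage from ``smooth bijection'' to ``diffeomorphism'' for $\psi$, and the construction of the smooth local sections underlying local triviality. Both rest on the same input---that $\pi$ is a submersion and that the $I(\mathcal{M})$-action is proper---rather than on any formal set-theoretic manipulation, since well-definedness, bijectivity, freeness, and the fiber identification are routine. I would therefore concentrate the rigor on the local-section construction, from which both local triviality and the constant-rank argument for $\psi$ follow.
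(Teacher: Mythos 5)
The paper never proves this proposition---it is stated as a classical fact with a citation to Helgason, and no argument appears in the text or the appendix. Your proof is the standard one that the citation points to (Myers--Steenrod to make $I(\mathcal{M})$ a Lie group, closedness of the stabilizer, the quotient manifold theorem for $\pi$, the equivariant rank theorem to upgrade the smooth bijection $\psi$ to a diffeomorphism, and local sections of the submersion for local triviality), and it is correct; it supplies exactly what the paper delegates to the reference.

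One small correction: the parenthetical claim that \emph{properness} of the $I(\mathcal{M})$-action is what guarantees the orbit is all of $\mathcal{M}$ with the right topology is misplaced. Surjectivity of $\psi$ is immediate from the assumed transitivity, and the passage from bijective constant-rank map to diffeomorphism is the global rank theorem, which needs only second countability of the manifolds involved, not properness. The argument stands without that remark, so this is a blemish rather than a gap.
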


Moreover, a section is a continuous right inverse of $\pi$, which is denoted by $\sigma: B\rightarrow E$. In literature \cite{helgason1962differential}, a zero section (denoted by $S\subset G$) is the section containing the identity element of $H$. Let $\sigma_0: S\rightarrow \mathcal{M}$ be a  diffeomorphism. Given $\left\{v_{\alpha}:G \rightarrow \mathbf{R}\right\}$ be the set of basis of $L^2\left(G, \mathbf{R}\right)$. Then, we can get the induced basis on $L^2\left(\mathcal{M}, \mathbf{R}\right)$ as $\left\{\widetilde{v}_{\alpha} = v_{\alpha}\circ \sigma_0^{-1}\right\}$. A schematic of an example fiber bundle is shown in Fig. \ref{figbasis}.

{\bf Example:} Consider the example of $\mathcal{M}$ $=\mathbf{S}^2$, where $G =$ $\textsf{SO}(3)$, $H = \textsf{SO}(2)$. A choice of basis on $L^2\left(G, \mathbf{R}\right)$ is Wigner D-functions denoted by $\left\{D_{l,m}^j | j \in \{0, 1, \cdots, \infty\}, -j\leq l, m\leq j\right\}$. Let $(\alpha, \beta, \gamma)$ be the parametrization of $\textsf{SO}(3)$ and the zero section ($S$) be denoted by $\{\alpha, \beta, 0\}$.  Then, $\left\{D_{l,0}^j\right\}$ are the choice of basis on $S\subset G$, which gives the induced basis on $\mathbf{S}^2$ as $\widetilde{D}_{l}^j(\theta, \phi) = \sqrt{\frac{2l+1}{4\pi}} D_{l,0}^j(\phi, \theta, 0)$. Further, observe that $\left\{\widetilde{D}_{l}^j\right\}$ are the spherical harmonics basis.

\section{Higher order correlation on Riemannian homogeneous spaces}
\label{theory}

In this section, we define a higher order correlation operator on
Riemannian homogeneous spaces using the Volterra series, state a theorem demonstrating it's
symmetry equivariance and show how to compute it efficiently using 
first order correlation operations. Further, we  prove that the set of functions which can be written as sums of products of linear operators and are $G$-equivariant can be expressed as a Volterra series.
This partially generalizes the non-linear shift equivariance characterization of Volterra expansions in
Euclidean space. 

\subsection{Volterra Series on Homogenous Spaces}
\label{theory:volterra}
We now generalize the Volterra Series to Riemannian homogeneous spaces.

\begin{definition}[Volterra series expansion]
\label{theory:def6} 
We define the Volterra expansion of a function $f:\mathcal{M} \rightarrow \mathbf{R}$ or $f:G \rightarrow \mathbf{R}$ by $F(f) = \sum_{n=1}^{\infty} \left(f \star_{n} w_n\right)$. 
If $f : \mathcal{M} \to \mathbf{R}$ and $w_n : \left(\mathcal{M}\right)^{\oplus n} \rightarrow \mathbf{R}$ then $\left(f \star_{n} w_n\right) : G \rightarrow \mathbf{R}$ is defined as,
\begin{align*}
\left(f \star_{n} w_n\right)(g) &:= \int_{\mathcal{M}}\cdots  \int_{\mathcal{M}} f(x_1) \cdots f(x_n) \left( g \cdot w_n\right) \\ &(x_1, \cdots, x_n) \omega^{\mathcal{M}}(x_1) \cdots \omega^{\mathcal{M}}(x_n)
\end{align*}
If instead $f : G \to \mathbf{R}$ and $w_n : \left(G\right)^{\oplus n} \rightarrow \mathbf{R}$ then $\left(f \star_{n} w_n\right) : G \rightarrow \mathbf{R}$ is defined as,
\begin{align*}
\left(f \star_{n} w_n\right)(g) &:= \int_{G}\cdots  \int_{G} f(h_1) \cdots f(h_n) \left( g \cdot w_n  \right) \\ &(h_1, \cdots, h_n) \mu_G(h_1) \cdots \mu_G(h_n)
\end{align*}
where $\mu_G$ is the Haar measure on $G$ (which again, is guaranteed to exist based on our assumption that $G$ is a locally compact topological group). 
\end{definition}

One can easily see that, Definition \ref{theory:def5} is a special
case of definition \ref{theory:def6} when $n=1$. When $n>1$, we will
call it the $n^{th}$ order Volterra expansion. Higher order terms of the Volterra 
expansion express polynomial relationships between function values. An 
illustration of the second order Volterra kernel is provided in Figure \ref{theory:fig_sec}. { As
we can see, the second order Volterra kernel has a regular correlation weight kernel at each location on the manifold
$\mathcal{M}$. The results of applying these weight kernels get multiplied together to get the output of $f \star_2 w_2$. }
A biological motivation is provided in \cite{zoumpourlis2017non} for the (Euclidean) Volterra series. Now, we prove that $F$ as defined in Definition \ref{theory:def6} is equivariant to the
symmetry group actions admitted by a homogeneous space.

\begin{theorem}
Let $S$ and $U$ be $L^2(\mathcal{M}, \mathbf{R})$ or $L^2(G, \mathbf{R})$ and $F: S \rightarrow U$ be a function given by $f \mapsto
\sum_{n=1}^{\infty} \left(f \star_{n} w_n\right)$. Then, $F$ is
equivariant.
\label{theory:thm3}
\end{theorem}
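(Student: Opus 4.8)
The plan is to reduce the equivariance of the full Volterra series to the equivariance of each individual term $f \mapsto f \star_{n} w_n$, and then to verify that single-term equivariance by a change-of-variables argument mirroring the $n=1$ case already established in Theorem \ref{theory:corr_eq}. Since the pullback action $\phi \cdot (\cdot)$ on functions is linear, we have $\phi \cdot \sum_{n=1}^{\infty} \left(f \star_{n} w_n\right) = \sum_{n=1}^{\infty} \phi \cdot \left(f \star_{n} w_n\right)$ (interpreting the series in $L^2(G, \mathbf{R})$, or formally term by term), so it suffices to show $\left(\phi \cdot f\right) \star_{n} w_n = \phi \cdot \left(f \star_{n} w_n\right)$ for each fixed $n$, and then sum.

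For the manifold case I would expand the left-hand side at a point $g \in G$ using Definition \ref{theory:def6}, substitute $(\phi \cdot f)(x_i) = f(\phi^{-1} \cdot x_i)$ into each of the $n$ factors, and apply the change of variables $x_i = \phi \cdot y_i$ in each integral. The key fact invoked here is that the volume form $\omega^{\mathcal{M}}$ is $G$-invariant, which holds because $G = I(\mathcal{M})$ acts by isometries and hence preserves the Riemannian volume form; this is precisely the $G$-invariant volume form constructed in Section \ref{theory:basis}. After the substitution the density $f$-factors become $f(y_1) \cdots f(y_n)$ and the kernel factor becomes $(g \cdot w_n)(\phi \cdot y_1, \cdots, \phi \cdot y_n)$. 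The crux is then the identity $(g \cdot w_n)(\phi \cdot y_1, \cdots, \phi \cdot y_n) = ((\phi^{-1} g) \cdot w_n)(y_1, \cdots, y_n)$, which follows by unwinding the diagonal pullback action $(g \cdot w_n)(x_1, \cdots, x_n) = w_n(g^{-1} \cdot x_1, \cdots, g^{-1} \cdot x_n)$ together with associativity of the group action. This collapses the left-hand side to $\left(f \star_{n} w_n\right)(\phi^{-1} g)$, which is exactly $\left(\phi \cdot \left(f \star_{n} w_n\right)\right)(g)$ by the definition of the pullback action on functions over $G$, completing the single-term step.

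The group case ($f : G \to \mathbf{R}$) carries over verbatim with the Haar measure $\mu_G$ in place of $\omega^{\mathcal{M}}$; the role played above by the $G$-invariance of the volume form is here played by the left-invariance of the Haar measure, $\mu_G(gS) = \mu_G(S)$, noted in the discussion following Definition \ref{theory:def5}. I expect the main difficulty to be bookkeeping rather than conceptual: keeping the diagonal action of the multilinear kernel $w_n$ straight across all $n$ arguments and confirming that the change of variables leaves the product measure $\omega^{\mathcal{M}}(x_1) \cdots \omega^{\mathcal{M}}(x_n)$ invariant factor by factor. The only genuinely substantive hypothesis is the $G$-invariance of $\omega^{\mathcal{M}}$ (respectively $\mu_G$); without it the substitution would introduce a Jacobian or modular factor and equivariance could fail. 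Convergence of the infinite series is a separate technical point that I would either assume, as the statement implicitly does, or justify by noting that the interchange of summation with the linear, isometric pullback action is valid whenever the series converges in $L^2(G, \mathbf{R})$.
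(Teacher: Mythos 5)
Your proposal is correct and follows essentially the same route as the paper's proof: reduce to a single term $f \star_n w_n$ by linearity of the pullback action, then apply the change of variables $x_i = \phi \cdot y_i$ in each integral, using the $G$-invariance of $\omega^{\mathcal{M}}$ (equivalently, invariance of the Haar measure $\mu_G$ in the group case) and the diagonal action on the kernel to collapse the result to $\left(f \star_n w_n\right)(\phi^{-1}g)$. Your added remarks on series convergence and the necessity of measure invariance are sound points that the paper leaves implicit, but they do not change the argument.
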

\vspace*{-1em}
\begin{proof}
 Observe that the sum of
equivariant operators is equivariant.  Hence, we only need to check
that $f \star_{n} w_n$ is equivariant for all $n$. Let $g, h \in G$,
let $n \in \mathbf{N}$. Then,
{\color{blue}
\begin{align*}
\left(g.f \star_{n} w_n\right)\left(h\right) &= \left(L^*_{g^{-1}}f \star_{n} w_n\right)\left(h\right) \\
&= \int_{\mathcal{M}}\cdots  \int_{\mathcal{M}} L^*_{g^{-1}}f(x_1) \cdots L^*_{g^{-1}} f(x_n) \\ &\left(L^*_{h^{-1}}w_n\right)(x_1, \cdots, x_n) \omega^{\mathcal{M}}(x_1) \cdots \omega^{\mathcal{M}}(x_n) \\
&= \int_{\mathcal{M}}\cdots  \int_{\mathcal{M}} f(y_1)\cdots f(y_n) w_n\left((h^{-1}g)\cdot y_1 \right. \\&\left. ,\cdots, (h^{-1}g)\cdot y_n\right) \omega^{\mathcal{M}}(g\cdot y_1) \cdots \omega^{\mathcal{M}}(g\cdot y_n) \\
&= \int_{\mathcal{M}}\cdots  \int_{\mathcal{M}} f(y_1)\cdots f(y_n) w_n\left((h^{-1}g)\cdot y_1 \right. \\&\left. ,\cdots, (h^{-1}g)\cdot y_n\right) \omega^{\mathcal{M}}(y_1) \cdots \omega^{\mathcal{M}}(y_n) \\
&= \left(f \star_{n} w_n\right)(g^{-1}h) \\
&= L^*_{g^{-1}} \left(f \star_{n} w_n\right) \left(h\right) \\
&= \left(g\cdot \left(f\star_{n} w_n\right)\right)\left(h\right)
\end{align*}
}
{ Here, $(L_g^* f)(h) = f(g^{-1}h)$ (see appendix for details)}, since, $g, h \in G$ and $n$ are arbitrary $F$ is equivariant.
\end{proof}

In the other direction, we also show that for the aforementioned set of functions, every $G$-equivariant
function can be written as a Volterra series. 

\begin{theorem}
{\color{blue} Let $S$ and $U$ be $L^2(\mathcal{M}, \mathbf{R})$ or $L^2(G, \mathbf{R})$ and $F: S \rightarrow U$ be a non-linear $G$-equivariant function which can be written as
$F = \sum_{i \in I} F_i$, where each $F_i$ is a product of two linear functions, i.e., $F_{i} = F_{i,1}F_{i,2}$. Then,
$\exists \left\{w_{i}\right\}_{i \in I} \subset S$ such that, $\left(F(f)\right)(g) = \sum_{i\in I}\left(f \star_2
w_{i}\right)(g)$, for all $f \in S$ and $g \in G$.}
\end{theorem}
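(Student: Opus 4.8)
The plan is to mirror the strategy used for the converse of the linear LGE theorem in Appendix~B, upgrading it from a linear (Riesz) representation to a bilinear, rank-one one so that the resulting index set matches the $I$ in the hypothesis. First I would record the structural observation that, because each summand $F_i = F_{i,1}F_{i,2}$ is a product of two \emph{linear} maps, $F$ is second-order homogeneous, $F(\lambda f) = \lambda^2 F(f)$. Crucially, the argument will never require the individual factors $F_{i,j}$ to be equivariant; only the global equivariance of $F$ is used, which is what lets us transport a representation obtained at a single group element to all of $G$.

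Next I would reduce everything to evaluation at the identity $e \in G$. For each $i$, the maps $f \mapsto F_{i,1}(f)(e)$ and $f \mapsto F_{i,2}(f)(e)$ are bounded linear functionals on $S = L^2(\mathcal{M},\mathbf{R})$ (resp. $L^2(G,\mathbf{R})$), so by the Riesz representation theorem there exist $u_i, v_i \in S$ with $F_{i,1}(f)(e) = \int_{\mathcal{M}} f(x)\, u_i(x)\, \omega^{\mathcal{M}}(x)$ and $F_{i,2}(f)(e) = \int_{\mathcal{M}} f(y)\, v_i(y)\, \omega^{\mathcal{M}}(y)$. Setting $w_i(x,y) := u_i(x)\, v_i(y)$ and multiplying the two integrals gives, after Fubini, $F(f)(e) = \sum_{i\in I} \int_{\mathcal{M}}\int_{\mathcal{M}} f(x)\, f(y)\, w_i(x,y)\, \omega^{\mathcal{M}}(x)\, \omega^{\mathcal{M}}(y)$, which is exactly $\sum_{i\in I} (f \star_2 w_i)(e)$.

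Finally I would promote this identity to an arbitrary $g \in G$ using equivariance alone. Since $F$ is equivariant, $F(f)(g) = F(g^{-1}\cdot f)(e)$ (the manipulation in the proof of Theorem~\ref{theory:thm3} read backwards), so substituting $g^{-1}\cdot f$ into the previous display and using $(g^{-1}\cdot f)(x) = f(g\cdot x)$ yields $F(f)(g) = \sum_{i\in I} \int_{\mathcal{M}}\int_{\mathcal{M}} f(g\cdot x)\, f(g\cdot y)\, w_i(x,y)\, \omega^{\mathcal{M}}(x)\, \omega^{\mathcal{M}}(y)$. Substituting $x \to g^{-1}\cdot x$, $y \to g^{-1}\cdot y$ under the $G$-invariant volume form constructed in Section~\ref{theory:basis} converts $w_i(x,y)$ into $(g\cdot w_i)(x,y)$ and recovers $\sum_{i\in I}(f \star_2 w_i)(g)$, as claimed.

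I expect the main obstacle to be analytic bookkeeping rather than algebra. The Riesz step needs the evaluation functional $h \mapsto h(e)$ to be bounded on the output space, which is delicate for $L^2$, where pointwise values are only defined almost everywhere; one must either restrict to the dense subspace on which the correlation outputs $f \star w$ are genuinely continuous, or regularize the evaluation. The change of variables must also be justified, and this is precisely where the $G$-invariance of $\omega^{\mathcal{M}}$ from the fiber-bundle construction is indispensable. Finally, if $I$ is infinite one must control the convergence of $\sum_{i\in I}(f \star_2 w_i)$ and confirm that each $w_i = u_i \otimes v_i$ lies in the second-order kernel space $L^2(\mathcal{M}^{\oplus 2},\mathbf{R})$.
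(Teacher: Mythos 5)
Your proposal is correct (at the paper's level of analytic rigor) but takes a genuinely different route from the paper's own proof. The paper's proof is a three-line reduction: it treats each term $F_i = F_{i,1}F_{i,2}$ separately, assumes---parenthetically, inside the proof---that each linear factor $F_{i,k}$ is itself $G$-equivariant, invokes the linear converse (LGE) theorem to obtain $w_{i,k}$ with $F_{i,k}(f) = f \star w_{i,k}$, and concludes from the separability identity $f \star_2 w_i = (f \star w_{i,1})(f \star w_{i,2})$ for $w_i(x,y) = w_{i,1}(x)\, w_{i,2}(y)$. You instead argue from scratch: Riesz representation of the factors' evaluation functionals at the identity produces rank-one kernels $w_i = u_i \otimes v_i$, and global equivariance of $F$, through $F(f)(g) = F(g^{-1}\cdot f)(e)$ plus the $G$-invariant change of variables, transports the identity-point formula to all of $G$. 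Your route buys a real improvement in faithfulness to the statement: it uses only equivariance of the sum $F$, never of the individual factors, whereas the paper's proof silently strengthens the hypotheses (the theorem as stated never asserts that the $F_{i,k}$ are equivariant). What the paper's route buys is modularity: all analytic delicacies are deferred to the already-proved linear theorem, while your argument must face them directly---chiefly the unboundedness of pointwise evaluation on $L^2$, which you correctly flag and which needs the same repair (continuity of correlation outputs, or a density argument) that any Riesz-style proof of the linear converse requires. Two small remarks: your last worry is vacuous, since $u_i \otimes v_i$ automatically lies in $L^2\left(\mathcal{M}^{\oplus 2}, \mathbf{R}\right)$ with norm $\|u_i\|\,\|v_i\|$; and your transport step, unwound, re-derives exactly the separability identity $(f\star u_i)(f\star v_i) = f\star_2 (u_i\otimes v_i)$ that the paper takes as its key lemma, so the two proofs share the same algebraic core even though they deploy it in opposite directions.
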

\begin{proof}
    It suffices to show that for each term $F_i = F_{i,1} F_{i,2}$ (for $F_{i,k}$ a linear $G$-equivariant function) 
there exists $w_i$ such that $F_i = f \star_2 w_i$. If $w_i(x,y) = w_{i,1}(x) w_{i,2}(y)$
(i.e. $w_i$ is separable), then $f \star_2 w_i = (f \star w_{i,1}) (f \star w_{i,2})$. 
But by the previous theorem, there exists $w_{i,k}$ such that $F_{i,k} = f \star w_{i,k}$,
completing the proof. 
\end{proof}

These results partially generalize the well know non-linear shift equivariance characterization of Volterra expansions in
Euclidean space and justifies the use of the Volterra series as a higher-order generalization of the
correlation operation Definition \ref{theory:def5}. 

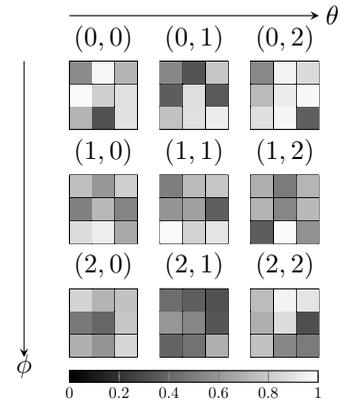
\begin{figure}
    \centering
    \begin{tikzpicture}[scale=0.6]
        \draw[step=0.5cm,color=black] (0,0) grid (1.5,1.5);
        \draw[step=0.5cm,color=black] (1.99,0) grid (3.5,1.5);
        \draw[step=0.5cm,color=black] (3.99,0) grid (5.5,1.5);
        
        \draw[step=0.5cm,color=black] (0,1.99+0.5) grid (1.5,3.5+0.5);
        \draw[step=0.5cm,color=black] (1.99,1.99+0.5) grid (3.5,3.5+0.5);
        \draw[step=0.5cm,color=black] (3.99,1.99+0.5) grid (5.5,3.5+0.5);
        
        \draw[step=0.5cm,color=black] (0,3.99+1) grid (1.5,5.5+1);
        \draw[step=0.5cm,color=black] (1.99,3.99+1) grid (3.5,5.5+1);
        \draw[step=0.5cm,color=black] (3.99,3.99+1) grid (5.5,5.5+1);
        
        \foreach \xx in {0.0, 2.0, 4.0}{
            \pgfmathsetmacro \xs{\xx}
            \pgfmathsetmacro \xm{add(\xx,0.5)}
            \pgfmathsetmacro \xe{\xx+1}
            
            \foreach \yy in{0.0, 2.0+0.5, 4.0+1}{
                \pgfmathsetmacro \ys{\yy}
                \pgfmathsetmacro \ym{\yy+0.5}
                \pgfmathsetmacro \ye{\yy+1}
                 \foreach \y in {\ys,\ym,\ye} {
                    \foreach \x in {\xs,\xm,\xe} {
                        \pgfmathparse{70*rnd+30}
                        \edef\tmp{\pgfmathresult}
                        \fill [white!\tmp!black] (\x+0.01,\y+0.01) rectangle ++(0.48,0.48);
                }
            }
            }
        }
        
        \node () at (0.75,6+1) {$(0,0)$};
        \node () at (2.75,6+1) {$(0,1)$};
        \node () at (4.75,6+1) {$(0,2)$};
        
        \node () at (0.75,4.5) {$(1,0)$};
        \node () at (2.75,4.5) {$(1,1)$};
        \node () at (4.75,4.5) {$(1,2)$};
        
        \node () at (0.75,2) {$(2,0)$};
        \node () at (2.75,2) {$(2,1)$};
        \node () at (4.75,2) {$(2,2)$};
        
        \node () at (5.8, 7.5) {$\theta$};
        \node () at (-1, -0.2) {$\phi$};

        \draw [->,>=stealth,blue] (0,7.5) -- (5.5,7.5);
        \draw [->,>=stealth,blue] (-1,6.5) -- (-1,0);
        
        \begin{axis}[
        hide axis,
        colormap/blackwhite, 
        colorbar horizontal,
        colorbar style={
            width=0.8*\pgfkeysvalueof{/pgfplots/parent axis width},
        },
        colorbar/width=2.5mm
    ]
    \end{axis}

    \end{tikzpicture}
    \caption{ Visualization (in the spirit of \cite{zoumpourlis2017non}) of second-order term $w_2 : \mathcal{M}^2 \to \mathbf{R}$ of a Volterra kernel (here on a $2$-manifold parametrized by $(\theta, \phi)$).  The coordinates above each grid represent the first entry $w_2(\mathbf{x},\cdot)$, and within each grid the gray-scale value represents the weight of the associated kernel $w_2(\mathbf{x},\mathbf{y})$. } 
    \label{theory:fig_sec}
\end{figure}

\subsection{Efficient Computation of the Second-Order Volterra Kernel}

The Volterra series presented in the previous definition is significantly more expressive 
than the correlation operation defined in Definition \ref{theory:def5} since it captures higher order 
relationships between inputs, but it requires the computation of iterated integrals and 
does not have an efficient GPU implementation. { Note that for separable second order kernel $w_2$,  $\left(f \star_{2} w_2\right)(g)$ can be factored as $\left(\left(f\star
\tilde{w}_2\right)\left(g\right)\right) \left(\left(f\star
\bar{w}_2\right)\left(g\right)\right)$}. Thus, we can compute
the second order Volterra series with separable kernel as a product of traditional correlation
operations. 
In general, we can
use a convex combination of first order and second order terms of the Volterra series to
define second order Volterra network. 

\begin{figure}[!ht]
    \vspace{-0em}
    \begin{center}
      \includegraphics[scale=0.35]{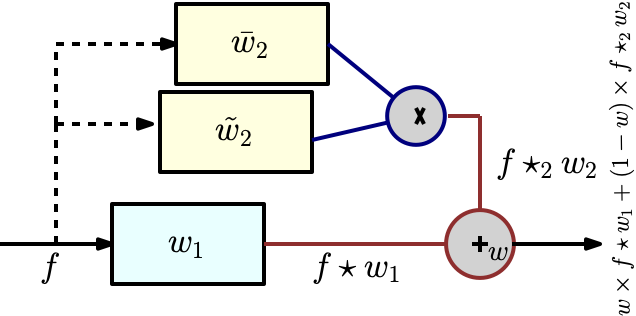}
\caption{\footnotesize {Second order Volterra correlation operator with first order kernel $w_1$ and separable second order kernel $w_2$.}}\label{fig0}
      \end{center}
      \vspace{-1.5em}
\end{figure}

A schematic diagram for the second
order Volterra correlation operator is shown in Fig. \ref{fig0}. {\color{blue} This representation of a second order kernel using product of two separable kernels is analogous to tensor product approximation of a function and can be shown to achieve approximation error of an  arbitrary precision \cite{hackbusch2007tensor}. The separability assumption on the kernels leads to efficient computation which is especially valuable in the network setting where these operations are performed numerous times.} 
\section{Architecture}
\label{vol_architecture}
We now present the basic modules for implementing our correlation and higher-order
Volterra operations as layers in a deep network.

\subsection{Correlation on homogeneous spaces}
Using definition \ref{theory:def5} we can define:
\vspace*{0.1cm} 

{\bf Correlation on $ \mathcal{M}$ - $\textsf{Corr}^{\mathcal{M}}(f,w)$}: Let $f \in
L^2\left(\mathcal{M}, \mathbf{R}\right)$ be the input function and $w
\in L^2\left(\mathcal{M}, \mathbf{R}\right)$ be the mask. Then, using
definition \ref{theory:def5}, $\textsf{Corr}^{\mathcal{M}}(f,w)$ is defined as
$\left(f \star w\right): G \rightarrow \mathbf{R}$. We have shown in Theorem
\ref{theory:corr_eq}, that $\textsf{Corr}^{\mathcal{M}}(f,w)$ is
equivariant to the action of $G$. Hence, we can use $\textsf{Corr}^{G}$ 
layer as the next layer.
\vspace*{0.1cm}

{\bf Correlation on $G$ - $\textsf{Corr}^{G}(f,w)$}: Let $\widetilde{f} \in
L^2\left(G, \mathbf{R}\right)$ be the input function and $w \in
L^2\left(G, \mathbf{R}\right)$ be the mask. Then analogous to
$\textsf{Corr}^{\mathcal{M}}$, we can define $\textsf{Corr}^{G}(f,w)$ as $\left(\widetilde{f}
\star w\right): G \rightarrow \mathbf{R}$ using definition
\ref{theory:def5}. We have used
Theorem \ref{theory:corr_eq} to show that $\textsf{Corr}^{G}(f, w)$ 
is equivariant to the action of $G$. {\it Since this is an operation
  equivariant to $G$, we can cascade $\textsf{Corr}^{G}$}.

\subsection{Volterra on homogeneous spaces}

We can see that because the basic
architecture of second order Volterra series consists of the following
modules:
\vspace*{0.1cm} 

{\bf Second order Volterra on $\mathcal{M}$ - $\textsf{Corr}_2^{\mathcal{M}}(f,
  w_1, w_2)$:} Let $f \in L^2\left(\mathcal{M}, \mathbf{R}\right)$ be the
input function and $w_1 : \mathcal{M} \rightarrow \mathbf{R}$ and $w_2 : \left(\mathcal{M}\right)^{\oplus 2} \rightarrow \mathbf{R}$ be
the kernels. Then, $\textsf{Corr}_2^{\mathcal{M}}(f,
  w_1, w_2) := \sum_{j=1}^2 \left(f \star_j
w_j\right): G \rightarrow \mathbf{R}$. We have shown in Theorem
\ref{theory:thm3}, that $\textsf{Corr}_2^{\mathcal{M}}(f,
  w_1, w_2)$ is
equivariant to the action of $G$. Hence, we can use $\textsf{Corr}_2^{G}(f,
  w_1, w_2)$ layer as the next layer.

{\bf Second order Volterra on $G$ - $\textsf{Corr}_2^{G}(f, w_1,
  w_2)$:} Let $f \in L^2\left(G, \mathbf{R}\right)$ be the input
function and $w_1 : G \rightarrow \mathbf{R}$ and $w_2 :
\left(G\right)^{\oplus 2} \rightarrow \mathbf{R}$ be the
kernels. Then, $\textsf{Corr}_2^{G}(f, w_1, w_2) := \sum_{j=1}^2
\left(f \star_j w_j\right): G \rightarrow \mathbf{R}$. We have used
Theorem \ref{theory:thm3} to show that $\textsf{Corr}_2^{G}(f, w_1,
w_2)$ is equivariant to the action of $G$. {\it Since this is an operation
  equivariant to $G$, we can cascade $\textsf{Corr}_2^{G}(f, w_1,
  w_2)$}.
\subsection{Other Layers}
{\bf Activation function:} Since the outputs of all the above layers are functions from $G$ to $\mathbf{R}$, we
will use the standard activation operation on $\mathbf{R}$.

\vspace*{0.1cm}
{\bf Invariant last layer:} { As both layers, $\textsf{Corr}_2^{\mathcal{M}}$ and $\textsf{Corr}_2^{G}$ are equivariant to the action of $G$, so are the cascaded layers. Since, if the input signal is transformed by a group element $g\in G$, so is the output of $\textsf{Corr}_2^{\mathcal{M}}$ as this layer is equivariant. Thus the output of $\textsf{Corr}_2^{\mathcal{M}}$ is transformed by the same group element $g$. Hence, the input of $\textsf{Corr}_2^{G}$ is transformed by $g$ and due to the equivariance so is the output of $\textsf{Corr}_2^{G}$. This justifies that the cascaded layers are equivariant to the action of $G$.}
Hence, after the cascaded correlation layers, the
output $\widetilde{f} \in L^2\left(G,
\mathbf{R}\right)$ lies on a $G$ set. Similar to the Euclidean CNN, we want the last
layer to be $G$ invariant. Hence, we will integrate $\widetilde{f}$ on
the domain $G$ and return a scalar. Note that in the experiment, we
learn multiple channels analogous to the Euclidean CNN, where in each
channel, we learn a $G$ equivariant $\widetilde{f} \in L^2\left(G,
\mathbf{R}\right)$. Thus, after the integration, we have $c$ scalars,
where $c$ is the number of channels, which will be input to the
softmax fully connected layer similar to the Euclidean CNN. { We
abbreviate this last layer as {\bf iL} (invariant layer).}

A schematic diagram of our proposed VolterraNet is shown in Fig. \ref{fig1}.

\begin{figure}[!ht]
        \centering
                \includegraphics[scale=0.25]{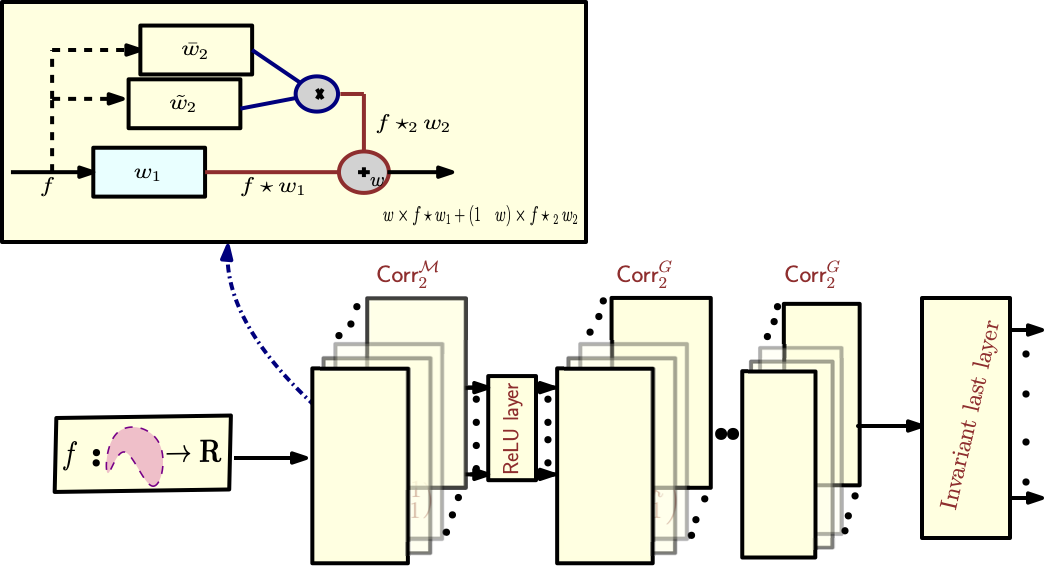}
               \caption{\footnotesize Schematic diagram of a second order Volterranet}\label{fig1}
\end{figure}

\section{Dilated VolterraNet}\label{dilated_theory}

In this section, we propose a dilated VolterraNet framework which is suitable for sequential data. {Sequential data here refers to a sequence of data points (signal measurements at voxels) along the neuronal fiber tracts that are extracted from diffusion MRI data sets. Neuronal fiber tracts in certain regions of the brain are disrupted by movement disorders such as Parkinsons disease. The sensory motor area tract (pathway) in the brain is one such neuronal pathway where the disease caused changes are expected to be observed. By treating this pathway as a sequence of points where diffusion sensitized MR signal is acquired, we propose to apply the dilated VolterraNet (described below) to analyze this data.} 

{It is known that the sequential data should involve recurrent structure \cite{elman1990finding}, but as pointed out in \cite{bai2018convolutional}, convolutional architectures often outperform recurrent models in sequential data analysis. Furthermore, recurrent models are computationally more expensive than the convolutional models. But, note that in order to mimic the infinite memory capabilities of a recurrent model, one needs to increase the receptive field by using the dilated convolutions.}
We will first recap the definition of Euclidean dilated convolution \cite{bai2018convolutional}  and then describe the proposed dilated VolterraNet. 

\subsection{Euclidean Dilated Convolution:}\label{dilated_theory_1} Given a one-dimensional input sequence $\mathbf{x}: \mathbf{N} \rightarrow \mathbf{R}^n$ and a kernel $w: \left\{0, \cdots, k-1\right\} \rightarrow \mathbf{R}$, the dilated convolution function $\left(\mathbf{x}\star_d w\right):
\mathbf{N} \rightarrow \mathbf{R}^n$ is defined as, $
\left(\mathbf{x}\star_d w\right)(s) = \sum_{i=0}^{k-1} w(i)
\mathbf{x}(s-d\times i), $ where $\mathbf{N}$ is the set of natural numbers and $k$ and $d$ are the kernel size and the dilation factor respectively.  Note that with $d=1$, we get the normal convolution operator. In a dilated CNN, the receptive field size will depend on the depth of the network as well as on the choice of $k$ and $d$. 

\subsection{Dilated VolterraNet} 
\label{theory:dvnet}
Now we present a dilated VolterraNet  model by combining the VolterraNet with the dilated CNN model. Given a one-dimensional input sequence $\left\{f_i: \mathcal{M}
\rightarrow \mathbf{R}\right\}$, we will first apply
$\textsf{Corr}_2^{\mathcal{M}}$ and cascaded $\textsf{Corr}_2^{G}$ layers to each point in the sequence independently. The output of a $\textsf{Corr}_2^{G}$ layer is a function $G \rightarrow \mathbf{R}$. Let the output of the last $\textsf{Corr}_2^{G}$ layer be $\left\{g_i: G \rightarrow
\mathbf{R}\right\}$. Then, we discretize the group $G$, to represent each $g_i$ by a vector $\mathbf{x}_i$ (as shown in Fig. \ref{fig2}). {\color{blue} The steps of discretization, i.e., length of $\mathbf{x}_i$, are chosen via grid search in the experimental section. This is analogous to the standard practice in literature \cite{Cohen-ICLR18,esteves2018learning}. Polar coordinates on $G$ are used  to discretize $G$ and then we use the dilated CNN by treating each sample as a vector. This essentially amounts to choosing a uniform grid in the parameter space using Rodrigues vectors \cite{hamilton1866elements}, although more sophisticated techniques can be employed in this context \cite{kurz2017discretization}.} Now, we input
$\left\{\mathbf{g}_i\right\}$ to the Euclidean dilated CNN (since the components of $\mathbf{g}_i$ are real) to construct a dilated VolterraNet framework. { In Fig. \ref{fig2}, we present a schematic of dilated VolterraNet with input $\left\{f_i\right\}$ followed by {\bf $\textsf{Corr}_2^{\mathcal{M}}$ $\rightarrow$\textsf{ReLU}
$\rightarrow$ $\textsf{Corr}_2^{G}$ $\rightarrow$
\textsf{ReLU} $\rightarrow$ $\textsf{Corr}_2^{G}$}. }

A self explanatory schematic diagram of the
dilated VolterraNet architecture is shown in Fig. \ref{fig2}.

\begin{figure}[!ht]
        \centering
                \includegraphics[scale=0.2]{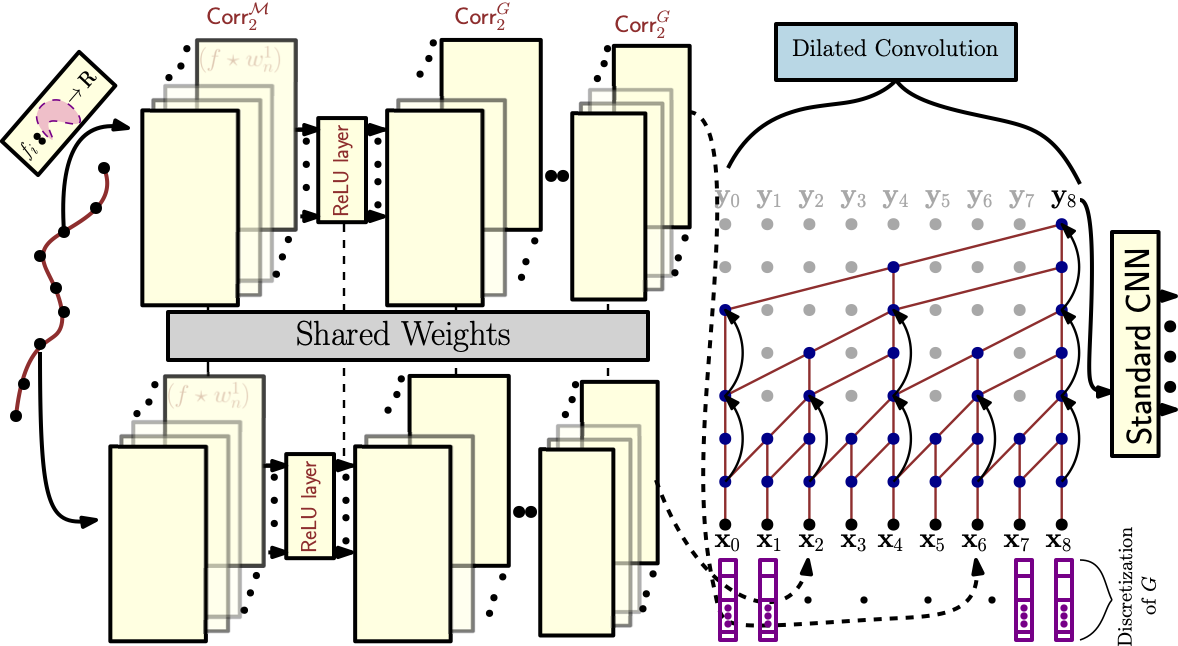}
               \caption{\footnotesize Schematic diagram of dilated VolterraNet}\label{fig2}
\end{figure}

\section{Experiments}\label{results}

In this section, we present experiments on spherical MNIST, atomic
energy and Shrec17 data sets respectively.  We present comparisons of
performance of our VolterraNet to Spherical CNN by Cohen et
al. \cite{Cohen-ICLR18} and Clebsch-Gordan net by Kondor et
al. \cite{kondor2018clebsch}. Further, we also present a separate
comparison with the spherical CNN presented most recently in
\cite{esteves2018learning} on the shrec17 data set. The separate
comparison was necessary due to the fact that the loss function used
in \cite{esteves2018learning} was distinct from the one used in
\cite{Cohen-ICLR18,kondor2018clebsch}.  Finally, we extend the
VolterraNet to a dilated version, a higher order analogue of the
dilated CNN and use it to demonstrate its efficacy in group testing on
diffusion MRI (dMRI) data acquired from movement disorder
patients. The data in this example reside in a product space, $\mathbf{S}^2
\times \mathbf{R}^{+}$, which is a Riemannian homogeneous space distinct from
$\mathbf{S}^2$. This experiment serves as an example demonstrating the ability
of VolterraNet to cope with manifolds other than the sphere.

{\bf Choice of basis:} In our experiments, we have three examples of manifolds, $\mathbf{S}^2$, $\mathbf{S}^2\times \mathbf{R}^+$ and $P_3$. For $\textsf{SO}(3)$, we use the Weigner basis and for $\mathbf{S}^2 \simeq  \textsf{SO}(3)/\textsf{SO}(2)$, we use the induced basis, i.e., the Spherical Harmonics basis. For $\mathbf{S}^2\times \mathbf{R}^+$, we use the product basis of each of the spaces, i.e., Spherical Harmonics for $\mathbf{S}^2$ and the canonical basis for $\mathbf{R}^+$. Since $P_3$ can be written as $\textsf{GL}(3)/\textsf{O}(3)$
, we use the induced basis on $P_3$ which are induced from the canonical basis on $\textsf{GL}(3)$.

In all the experiments we compare the VolterraNet architecture to the state of the
art models, and additionally compare it to an architecture which we call 
Homogeneous CNN (HCNN) which replaces the Volterra non-linear convolutions/correlations with
the correlation operations from definition \ref{theory:def5}. 
We have released an implementation of the VolterraNet architecture with the Spherical MNIST experiment which can be found at, {\color{blue}{\url{https://github.com/cvgmi/volterra-net}}}. 

\subsection{\bf Synthetic Data Experiment: Classification of data on $P_3$}
In this section, we first describe the process of synthesizing
functions $f: P_3 \rightarrow \left[0,1\right]$. In this experiment,
we generated data samples drawn from distinct Gaussian distributions
defined on $P_3$ \cite{cheng2013novel}. Let $\mathcal{X}$ be a $P_3$
valued random variable that follows $\mathcal{N}\left(M,
\sigma\right)$, then, the p.d.f. of $X$ is given by \cite{cheng2013novel}: 
\begin{align}
f_{\mathcal{X}}\left(X;M,\sigma\right) = \frac{1}{C(\sigma)}
\exp\left(-\frac{d^2(M, X)}{2\sigma^2}\right),
\end{align}
where, $d(.,.)$ is the affine invariant geodesic distance on $P_3$ as given by
$d(M, X) = \sqrt{\text{trace}\left(\left(\text{logm}\left(M^{-1}X\right)\right)^2\right)}$.

We first chose two sufficiently spaced apart location parameters $M_1$
and $M_2$ and then for the $i^{th}$ class we generate Gaussian
distributions with location parameters that are perturbations of $M_i$
and with variance $1$. This gives us two clusters in the space of
Gaussian densities on $P_3$, which we will classify using 
HCNN and VolterraNet. In this case, the HCNN network architecture is given by:
{\bf $\textsf{Corr}^{P_3}$ $\rightarrow$ \textsf{ReLU}
$\rightarrow$ $\textsf{Corr}^{\textsf{GL}(3)}$ $\rightarrow$
\textsf{ReLU} $\rightarrow$ $\textsf{Corr}^{\textsf{GL}(3)}$
$\rightarrow$ \textsf{ReLU} $\rightarrow$ \textsf{iL} $\rightarrow$ \textsf{FC}}.
and for VolterraNet the correlation operations are replaced with the corresponding Volterra 
convolutions. 
\begin{table}[!ht]
\begin{center}
\begin{tabular}{|c|c|c|}
\hline
Model & mean acc. & std. acc. \\
\hline\hline
VolterraNet & $\highest{ 91.50}$ & $\highest{0.08}$ \\
HCNN & $86.50$ & ${ 0.02}$\\
\hline
\end{tabular}
\end{center}
\caption{Comparative mean and stdev. on the synthetic data}
\label{results:tab000}
\vspace{-2em}
\end{table}

The data consists of $500$ samples from each class, where each sample is drawn from a Gaussian distribution on $P_3$. The classification accuracies in a ten-fold partition of the data are shown in Table \ref{results:tab000}. In most deep
learning applications, one is used to seeing a high classification
accuracy, but we believe that this can be achieved here as well by
increasing the number of layers and possibly overfitting the data. The
purpose of this synthetic experiment was not to seek an ``optimal''
classification accuracy but to provide a flexible framework which if
``optimally'' tuned can yield a good testing accuracy for data whose
domain is a non-compact Riemannian homogeneous space.

\subsection{Spherical MNIST Data Experiment}

The spherical MNIST data are generated using the scheme described in
\cite{Cohen-ICLR18}. There are two instances of this data, one in which we project MNIST digits on the northern hemisphere (denoted by `NR') and the other where we apply random rotation afterwards (denoted by `R'). The spherical signal is discretized using a bandwidth of 60. 

We selected the same baseline model as was chosen in
\cite{Cohen-ICLR18}, which is a Euclidean CNN with $5\times 5$ filters and $32, 64, 10$ channels with a stride of $3$ in each layer. This CNN is trained by mapping the digits from the northern hemisphere onto the plane. The Spherical CNN model \cite{Cohen-ICLR18} we used has the
following architecture (as was reported in \cite{Cohen-ICLR18}), {\bf
  $\textsf{Corr}^{\mathbf{S}^2}$ $\,\to\,$ ReLU $\,\to\,$
  $\textsf{Corr}^{\textsf{SO}(3)}$ $\,\to\,$ ReLU $\,\to\,$ FC} with bandwidths $20$, $12$ and the number of channels $20$, $40$ respectively. We used the same architecture for Clebsch-Gordan net as was reported in \cite{kondor2018clebsch}.

For our method, we used a second order Volterra network with the following architecture: {\bf $\textsf{Corr}_2^{\mathbf{S}^2}$
  $\,\to\,$ ReLU $\,\to\,$ iL $\,\to\,$ FC} with bandwidth $30$, $20$
respectively and number of features $25$, $10$ respectively. We chose
a batchsize of $32$ and learning rate of $5\times 10^{-3}$ with ADAM
optimization \cite{kinga2015method}.

We performed three sets of experiments: non rotated training and test sets (denoted by `NR/NR'), non rotated training and randomly rotated test sets (denoted by `NR/R') and randomly rotated both training and test sets (denoted by `R/R'). The comparative results in terms of classification accuracy are shown in Table \ref{tab1}.

\begin{table}[!ht]
\begin{center}
\begin{tabular}{|l|c|c|c|c|}
\hline
Method & NR/NR & NR/R & R/R & \# params. \\
\hline\hline
Baseline CNN &  97.67 & {22.18} & {12.00} & {68000}\\
Spherical CNN \cite{Cohen-ICLR18} & {95.59} & {94.62} & {93.40} & {58550}\\
Clebsch-Gordan net\cite{kondor2018clebsch}  & {96.00} & {95.86} & {95.80} & {342086}\\
\hline
VolterraNet & $\highest{ 96.72}$ & $\highest{ 96.10}$ & $\highest{96.71}$ &   $\highest{46010}$\\
\hline
\end{tabular}
\end{center}
\caption{Comparison of classification accuracy on Spherical MNIST data}
\label{tab1}
\vspace{-2em}
\end{table}

We can see that the VolterraNet performed better than all the three competing networks for both the `R/R' and 'NR/R' cases. Note that in terms of number of parameters, VolterraNet used {\bf 46010}, while Spherical CNN used
{\bf 58550} and Clebsch-Gordan net used {\bf 342086}. The baseline CNN
used {\bf 68000} parameters. Thus in comparison, we have approximately
an $86\%$ reduction in parameters over the Clebsch-Gordan net with
almost equal or better classification accuracy. In comparison to the
Spherical CNN, we have approximately a $21\%$ reduction in the
parameters over the Spherical CNN while achieving significantly better
performance. This clearly depicts the usefulness of our proposed
VolterraNet in comparison to existing networks used in processing this
type of data in a non-Euclidean domain.

\subsection{3D Shape Recognition Experiment}

We now report results for shape classification using the Shrec17
dataset \cite{yi2017large} which consists of $51300$ 3D models spread
over $55$ classes. This dataset is divided into a $70/10/20$ split for
train/validation/test. Following the method in \cite{Cohen-ICLR18}, we
perturbed the dataset using random rotations. We processed
the dataset as in \cite{Cohen-ICLR18}. Basically, we 
represented each 3D model by a spherical signal using a ray casting
scheme. For each point on the sphere, a ray towards the origin is sent
which collects the ray length, cosine and sine of the surface
angle. Additionally, the convex hull of the 3D shape gives 3 more
channels, which results in 6 input channels. The spherical signal is
discretized using Discoll-Healy grid \cite{driscoll1994computing} grid with
a bandwidth of 128. 


The Spherical CNN model \cite{Cohen-ICLR18} we used has the following
architecture (as was reported in \cite{Cohen-ICLR18}): {\bf
  $\textsf{Corr}^{\mathbf{S}^2}$ $\,\to\,$ BN $\,\to\,$ ReLU $\,\to\,$
  $\textsf{Corr}^{\textsf{SO}(3)}$ $\,\to\,$ BN $\,\to\,$ ReLU
  $\,\to\,$ $\textsf{Corr}^{\textsf{SO}(3)}$ $\,\to\,$ BN $\,\to\,$
  ReLU $\,\to\,$ FC} with bandwidths $32$, $22$ and $7$ and the number
of channels $50$, $70$ and $350$ respectively. We used the same
architecture for Clebsch-Gordan net as was reported in
\cite{kondor2018clebsch}.

In our method, we used a second order Volterra network with the
following architecture: {\bf $\textsf{Corr}^{\mathbf{S}^2}$ $\,\to\,$
  BN $\,\to\,$ ReLU $\,\to\,$ $\textsf{Corr}_2^{\textsf{SO}(3)}$
  $\,\to\,$ BN $\,\to\,$ ReLU $\,\to\,$ iL $\,\to\,$ FC} with
bandwidths $10$, $8$, $8$ respectively and number of features $60$,
$80$, $100$ respectively. We chose a batch size of $100$ and a learning rate of
$5\times 10^{-3}$ with ADAM optimization \cite{kinga2015method}. Table \ref{tab2} summarizes comparison of VolterraNet with other existing deep network architectures that reported results on this data in literature. From this table, it is evident that VolterraNet almost always yields classification accuracy results within the top three methods, while having the best parameter efficiency. 

\begin{table*}
\begin{center}
\begin{tabular}{|l|c|c|c|c|c|c|}
\hline
Method & P@N & R@N & F1@N & mAP & NDCG & \# params. \\
\hline\hline
Tasuma\_ReVGG \cite{tatsuma2009multi} & 0.70 & 0.76 & 0.72 & 0.69 & 0.78 & 3M\\ 
Furuya\_DLAN \cite{furuya2016deep} & 0.81 & 0.68 & 0.71 & 0.65 & 0.75 & 8.4M  \\ 
SHREC16-bai\_GIFT \cite{qi2016volumetric} & 0.68 & 0.66 & 0.66 & 0.60 & 0.73 & 36M \\ 
Deng\_CM-VGG5-6DB  & 0.41 & 0.70 & 0.47 & 0.52 & 0.62 & - \\ 
Spherical CNN \cite{Cohen-ICLR18} & 0.70 & 0.71 & 0.69 & 0.67 & 0.76 & 1.4Mil \\ 
Clebsch-Gordan net\cite{kondor2018clebsch}  & 0.70 & 0.72 & 0.70 & 0.68 & 0.76 & -\\ \hline
Ours (VolterraNet) & $\highest{ 0.71}$ (2nd) & $\highest{0.70}$ (3rd) & $\highest{0.70}$ (3rd) & $\highest{0.67}$ (3rd) & $\highest{0.75}$ (4th) & $\highest{396297}$\\
\hline \hline
\cite{esteves2018learning} (w triplet loss) & 0.72 & 0.74 & 0.69 & - & - & 0.5M\\
Ours (w triplet loss) & $\highest{0.73}$ & $\highest{0.74}$ & $\highest{0.70}$ & $\highest{0.68}$ & $\highest{0.76}$ & $\highest{396297}$\\ \hline
\end{tabular}
\end{center}
\caption{Comparison results in terms of classification accuracy on
  the shrec17 data}
\label{tab2}
\vspace{-1.8em}
\end{table*}

{\bf Comparison with Esteves et al. \cite{esteves2018learning}}: We
also compared our VolterraNet with recent work of Esteves et
al. \cite{esteves2018learning} using an extra in-batch triplet loss
\cite{schroff2015facenet} (as used in Esteves et
al. \cite{esteves2018learning}). We show the comparison results in
Table \ref{tab2} (last two rows), which clearly shows
that, \begin{inparaenum}[\bfseries (a)] \item The VolterraNet outperforms the network
  in \cite{esteves2018learning} (which is the state-of-the-art
  algorithm in terms of parameter efficiency). \item The triplet loss boosts the performance of
  VolterraNet relative to the baseline loss of cross entropy. \end{inparaenum}

\subsection{Regression Experiment: Prediction of Atomic Energy}

Here, we report the application of our VolterraNet to the QM7 dataset
\cite{blum2009970,rupp2012fast}, where the goal is to regress over
atomization energies of molecules given atomic positions $(\mathbf{p}_i)$ and
charges $(\mathbf{z}_i)$. Each molecule consists of at most $23$ atoms and the
molecules are of $5$ types (C, N, O, S, H). We use the Coulomb Matrix
(CM) representation proposed by\cite{rupp2012fast}, which is rotation
and translation invariant but not permutation invariant.  We used a
similar experimental setup to that described in \cite{Cohen-ICLR18}
for this regression problem. We define a sphere $\mathbf{S}_i$ around $\mathbf{p}_i$ for
each $i^{th}$ atom. We define the potential functions 
\begin{align}
U_{\mathbf{z}}(\mathbf{x}) =
\sum_{i \neq j, \mathbf{z}_j = \mathbf{z}} \frac{\mathbf{z}_i^t\mathbf{z}}{\|\mathbf{x}-\mathbf{p}_i\|},  
\end{align}
for every $\mathbf{z}$ and
for every $\mathbf{x}$ on the sphere $\mathbf{S}_i$. This yields a spherical signal
consisting of $5$ features which were discretized using the
Discoll-Healy grid \cite{driscoll1994computing}  with a bandwidth
of 20. For the VolterraNet, we used one $\mathbf{S}^2$ and
$\textsf{SO}(3)$ second order Volterra block with bandwidths $12, 8, 8,
4$ and number of features $8,10,20,50$ respectively.

We compute the loss and report it in Table \ref{tab4}. We can see that
VolterraNet performs better than the competing methods. For Spherical
CNN \cite{Cohen-ICLR18} and Clebsch-Gordan
net\cite{kondor2018clebsch}, we used similar architectures as
described in the respective papers. Spherical CNN \cite{Cohen-ICLR18}
and Clebsch-Gordan net\cite{kondor2018clebsch} use $1.4M$ and $1.1M$
parameters respectively, while the VolterraNet used $128460$, nearly an order
of magnitude reduction of parameters, while achieving the best classification accuracy. 
This illustrates the parameter efficiency gains that we get from using a higher order correlation, 
a richer feature, in the VolterraNet. 

\begin{table}
\begin{center}
\begin{tabular}{|l|c|}
\hline
Method & MSE  \\
\hline\hline
MLP/ Random CM \cite{montavon2012learning} & {5.96} \\ 
LGIKA (RF) \cite{raj2016local} & 10.82 \\ 
RBF Kernels/ Random CM \cite{montavon2012learning} & 11.42 \\ 
RBF Kernels/ Sorted CM \cite{montavon2012learning}& 12. 59 \\ 
MLP/ Sorted CM \cite{montavon2012learning} & 16.06 \\ 
Spherical CNN \cite{Cohen-ICLR18} & 8.47 \\ 
Clebsch-Gordan net\cite{kondor2018clebsch} & 7.97 \\ \hline
Ours (VolterraNet) & $\highest{5.92}$ (1st) \\
\hline
\end{tabular}
\end{center}
\caption{Comparison results on atomic energy prediction}
\label{tab4}
\vspace{-2.5em}
\end{table}
\vspace{-0.5em}

\subsection{Network architecture for dMRI data Using Dilated VolterraNet}
\label{architecture}
Diffusion MRI (dMRI) is an imaging modality that non-invasively measures the diffusion of water molecules in tissue samples being imaged. It serves as an interesting example of our framework since dMRI 
data can naturally be described by functions on a Riemannian homogeneous space. In this
section we describe the dMRI data and its processing using the framework presented in this
paper, which will help the reader understand the results of the following subsections.

In each voxel of a dMRI data set, the signal magnitude is represented by a real
number along each gradient magnetic field over a hemi-sphere of
directions in 3D. Hence, in each voxel, we have a function $f:
\mathbf{S}^2\times \mathbf{R}^{+} \rightarrow \mathbf{R}$. The
proposed network architecture has two components:  {\it
  intra-voxel layers} and {\it inter-voxel layers}. The {\it intra-voxel layers} extract features from each voxels, while the {\it inter-voxel layers} use dilated convolution to capture the interaction between extracted features. In our application in the next section we extract a sequence of voxels lying along a nerve fiber bundle in the brain known to be affected in Parkinson disease. Hence we have a sequence of functions along the  fiber bundle $\{f_i : \mathbf{S}^2 \times \mathbf{R}^+ \rightarrow \mathbf{R}\}$, making the application of the dilated VolterraNet in section \ref{theory:dvnet} appropriate. 
  
\subsubsection{Extracting intra-voxel features}

We extract intra-voxel features (independently) from each voxel. 
As mentioned before, in each voxel we have a function
$f: \mathbf{S}^2\times \mathbf{R}^{+} \rightarrow \mathbf{R}$. Since
$\mathbf{S}^2\times \mathbf{R}^{+}$ is a Riemannian homogeneous space
(endowed with the product metric), we will use a cascade of the Volterra
correlation layers defined earlier (with standard non-linearity between layers) to extract features
which are {\it equivariant} to the action of $\mathsf{SO}(3) \times
\left(\mathbf{R} \setminus \left\{0\right\}\right)$. These features are extracted independently within each voxel. Observe that this equivariance property is natural in the context of
dMRI data. Since in each voxel of the dMRI data, the signal is
acquired in different directions (in 3D), we want the features to be
equivariant to the 3D rotations and scaling. 

\subsubsection{Extracting inter-voxel features}

After the extraction of the intra-voxel features (which are
equivariant to the action of $G$), we seek to derive features based on
the interactions \textit{between} the voxels. Here we use the standard dilated convolution (as described in \ref{dilated_theory_1}) layers to capture the interaction between features extracted from voxels. 

Now, we are ready to give the details of the data used for the experiment of our proposed Dilated-VolterraNet.  For this experiment, we used a second order Dilated-VolterraNet with
$3$ dilated layers of kernel size $(5\times 5)$ and dilation factors of
$1$, $2$ and $4$ respectively.  

\subsection{Dilated VolterraNet Experiment: Group testing on movement disorder patients}

This dMRI data was collected  from  $50$ PD patients and $44$ controls at the University of Florida and are accessible via request from the NIH-NINDS Parkinson's Disease Biomarker Program portal \url{https://pdbp.ninds.nih.gov/}. All images were collected using a 3.0 T MR scanner (Philips Achieva) and 32-channel quadrature volume head coil. The parameters of the diffusion imaging acquisition sequence were as follows: gradient directions = 64, b-values = 0/1000 s/mm2, repetition
time =7748 ms, echo time = 86 ms, flip angle = $90^{\circ}$, field of view = $224 \times 224$ mm, matrix size = $112 \times 112$, number of contiguous axial slices = 60, slice thickness = 2 mm. Eddy current correction was applied to each data set by using standard motion correction techniques.
\begin{wrapfigure}{r}{3.8cm}
    \vspace{-1em}
    \begin{center}
      \includegraphics[scale=0.18]{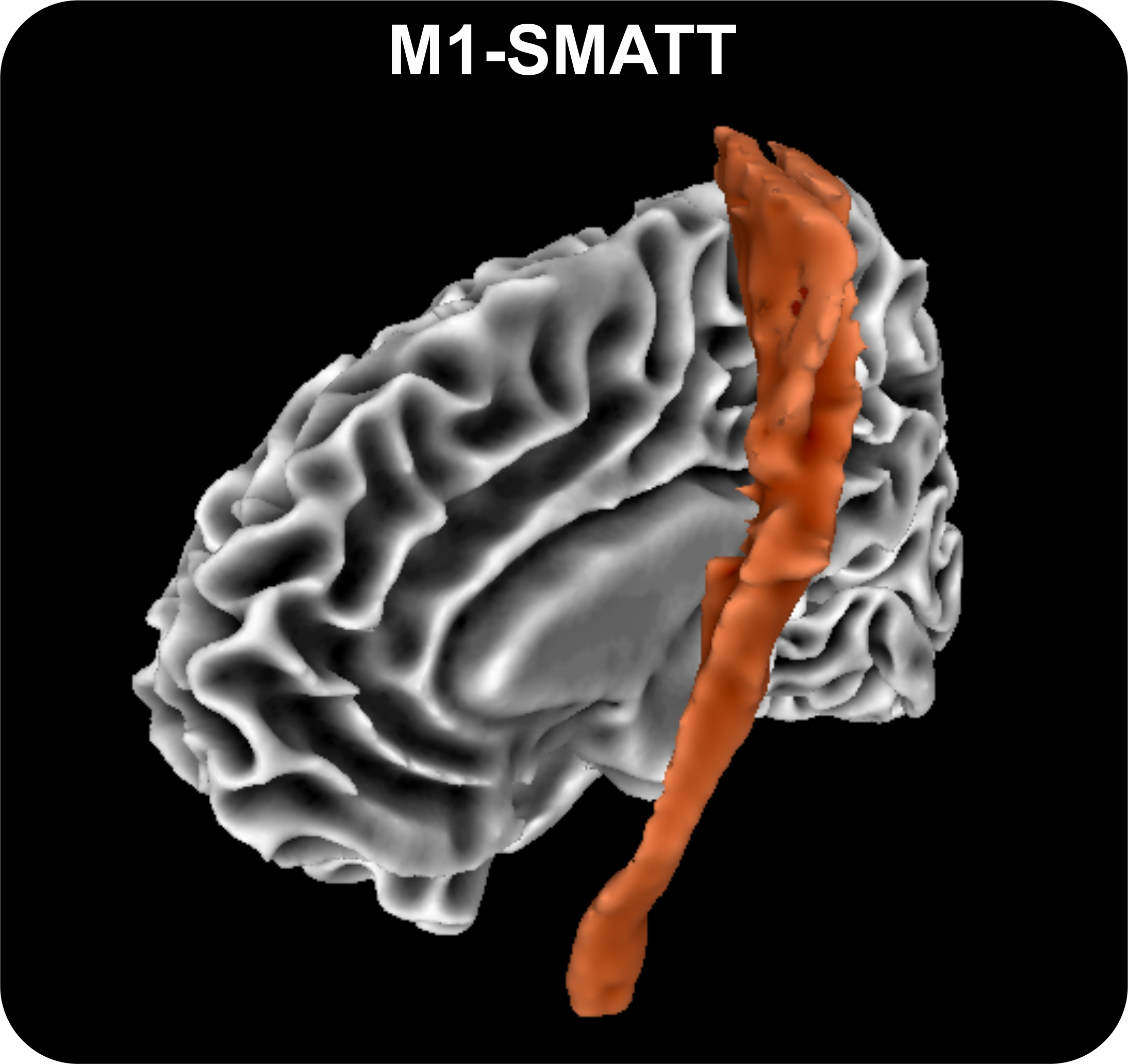}
      \vspace{-1em}
      \caption{M1 Template \cite{chakraborty2018statistical}}
      \label{results:fig1}
      \end{center}
      \vspace{-1em}
\end{wrapfigure}

We first extracted the sensory motor area tracts called M1 fiber tracts (as shown in Fig.\ref{results:fig1}) using the FSL software \cite{archer2017template} from both the left (`LM1') and right hemispheres (`RM1'). We applied the Dilated-Volterra to the raw signal measurements along the fiber tracts. Our model was trained on both the control (normal subjects) group and the PD group data sets, i.e., we learned two Dilated VolterraNet models, one each for the control and the PD groups
respectively. Using the method from \cite{triacca2016measuring} we
compute the distance between these two models, denoted by
$d$. Now we permute the class labels between the classes,
retrain two models and compute the network distance $d_j$. If there
are significant differences between the classes we should expect that
$d > d_j$. We repeat this experiment for $j=1,\ldots, 1000$
and let $p$ be the proportion of experiments for which $d
\leq d_j$. This is a permutation test of the null hypothesis: there is
no significant difference between the tract models learned from the
two different classes. We also performed ablation studies with regards to
the order of the model in the Dilated-VolterraNet to study the effect of higher order convolutions. We used the
following architecture {\bf $\textsf{Corr}_2^{\mathbf{S}^2}$ $\,\to\,$
  BN $\,\to\,$ ReLU $\,\to\,$ $\textsf{Corr}_2^{\textsf{SO}(3)}$
  $\,\to\,$ BN $\,\to\,$ ReLU} as our baseline model and then replaced
{\bf $\textsf{Corr}_2^{\textsf{SO}(3)}$} and {\bf
  $\textsf{Corr}_2^{\mathbf{S}^2}$} to {\bf
  $\textsf{Corr}^{\textsf{SO}(3)}$} and {\bf
  $\textsf{Corr}^{\mathbf{S}^2}$} respectively in an alternating
fashion. The ablation study result is presented in Table
\ref{tab6}. The `N' in {\bf $\textsf{Corr}^{\textsf{SO}(3)}$} ({\bf
  $\textsf{Corr}^{\mathbf{S}^2}$}) indicates that we used second order
for the respective convolution operator. The table shows that a
second order representation in later layers is very useful and hence a
model with {\bf $\textsf{Corr}^{\textsf{SO}(3)}$} performs poorly but
a model with {\bf $\textsf{Corr}^{\mathbf{S}^2}$} and {\bf
  $\textsf{Corr}_2^{\textsf{SO}(3)}$} performs as good as the model with
both second order kernels. Both models reject the null hypothesis with
$95\%$ confidence. 

We compared our dilated VolterraNet with the standard (no dilation) VoletrraNet and as expected we needed $\approx 1.5\times $ parameters in case of standard VolterraNet to achieve p-values of $0.03$ and $0.04$ for LM1 and RM1 respectively, which is similar in performance to its dilated counterpart. Additionally, we compared our network's performance to the performance of a similar dMRI architecture (recurrent model) namely, the SPD-SRU \cite{chakraborty2018statistical} and the baseline model used for comparison in \cite{chakraborty2018statistical} (see section 5.2 of \cite{chakraborty2018statistical} for details on the baseline model). 
We found that the baseline method yielded a p-value of $0.17$ and $0.34$ respectively for 'LM1' and 'RM1'. Whereas, the SPD-SRU architecture yielded a p-values of $0.01$ and $0.032$ respectively. We can conclude that both using standard and Dialted VolterraNet we can reject the null hypothesis with $95 \%$ confidence whereas Dilated VoletrraNet can achieve the statistically significant result with $\approx 33\%$ reduction in number of parameters compared to its standard  counterpart.

\begin{table}
\begin{center}
\scalebox{0.85}{
\begin{tabular}{|c|c|c|c|}
\hline
{\bf $\textsf{Corr}^{\mathbf{S}^2}$}  & {\bf $\textsf{Corr}^{\textsf{SO}(3)}$} & \multicolumn{2}{c|}{p-val.} \\
\hline
\cline{3-4} 
& & `LM1' & `RM1' \\\hline
N & N & $\highest{0.01}$ & $\highest{0.02}$ \\
Y & N & $\highest{0.04}$ & $\highest{0.03}$\\
N & Y & 0.13 & 0.24\\
Y & Y & 0.15 &  0.26\\
\hline
\end{tabular}
}
\end{center}
\caption{Ablation studies for Dilated-VolterraNet}
\label{tab6}
\vspace{-2em}
\end{table}

\section{Conclusions}\label{conc}
In this paper, we presented a novel generalization of CNNs to 
non-Euclidean domains specifically, Riemannian homogeneous 
spaces. More precisely, we introduced higher
order convolutions -- represented using a Volterra series -- on Riemannian 
homogeneous spaces. 
We call our network a Volterra homogeneous CNN abbreviated as VolterraNet. 
The
salient contributions of our work are: (i) A proof of equivariance of
higher order convolutions to group actions on homogeneous Riemannian
manifolds. Proofs of generalized Linear Shift Invariant (equivariant) and Nonlinear
Shift Invariant (eqivariant) theorems for correlations and Volterra series defined on
Riemannian homogeneous spaces. 
(ii) We prove that second order Volterra convolutions can be
expressed as a cascade of convolutions. This allows for efficient
implementation of second-order Volterra representation used in the VolterraNet.
(iii) In support of our conjecture on the reduced number of
parameters, real data experiments empirically
demonstrate that VolterraNet requires less number of parameters to achieve the baseline accuracy of classification in comparison to both 
Spherical-CNN and Clebsch-Gordan net. (iv) We also presented a dilated VolterraNet that was shown to be effective on a group testing experiment on movement disorder patients.
Our future work will be focused on performing
more real data experiments to demonstrate the power of VolterraNet for a
variety of data domains that are Riemannian homogeneous spaces.
\section*{Acknowledgements}
This research was in part funded by the NSF grant IIS-1724174 to BCV. We thank Professor David Vaillancourt of the University of Florida, Department of Applied Physiology and Kinesiology for providing us with the diffusion MRI scans used in this work.

\bibliographystyle{ieeetran}
\bibliography{references,references_v}

%
\vskip -3\baselineskip plus -1fil
\begin{IEEEbiography}
[{\includegraphics[width=1in,height=1.25in,clip,keepaspectratio]{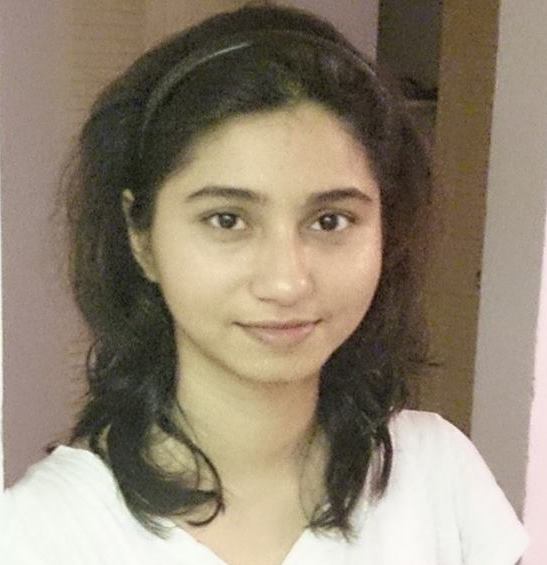}}]{Monami Banerjee} received her Ph.D.~in computer science from the Univ.~of Florida in 2018. She is currently a research staff member at Facebook Oculus, Menlo Park. Her research interests lie at the intersection of Geometry, Computer Vision and Medical Image Anlaysis.
\end{IEEEbiography}
\vskip -3\baselineskip plus -1fil
\begin{IEEEbiography}
[{\includegraphics[width=1in,height=1.25in,clip,keepaspectratio]{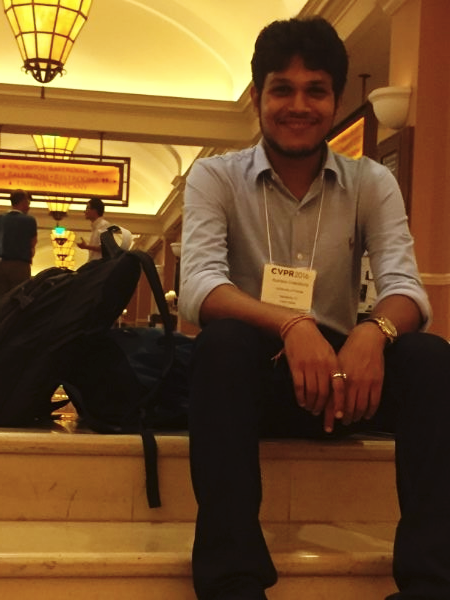}}]{Rudrasis Chakraborty}
received his Ph.D.~in computer science from the Univ.~of Florida in 2018. He is currently a post doctoral researcher at UC Berkeley. His research interests lie in the intersection of Geometry, ML and Computer Vision.
\end{IEEEbiography}
\vskip -2\baselineskip plus -1fil
\begin{IEEEbiography}
[{\includegraphics[width=1in,height=1.25in,clip,keepaspectratio]{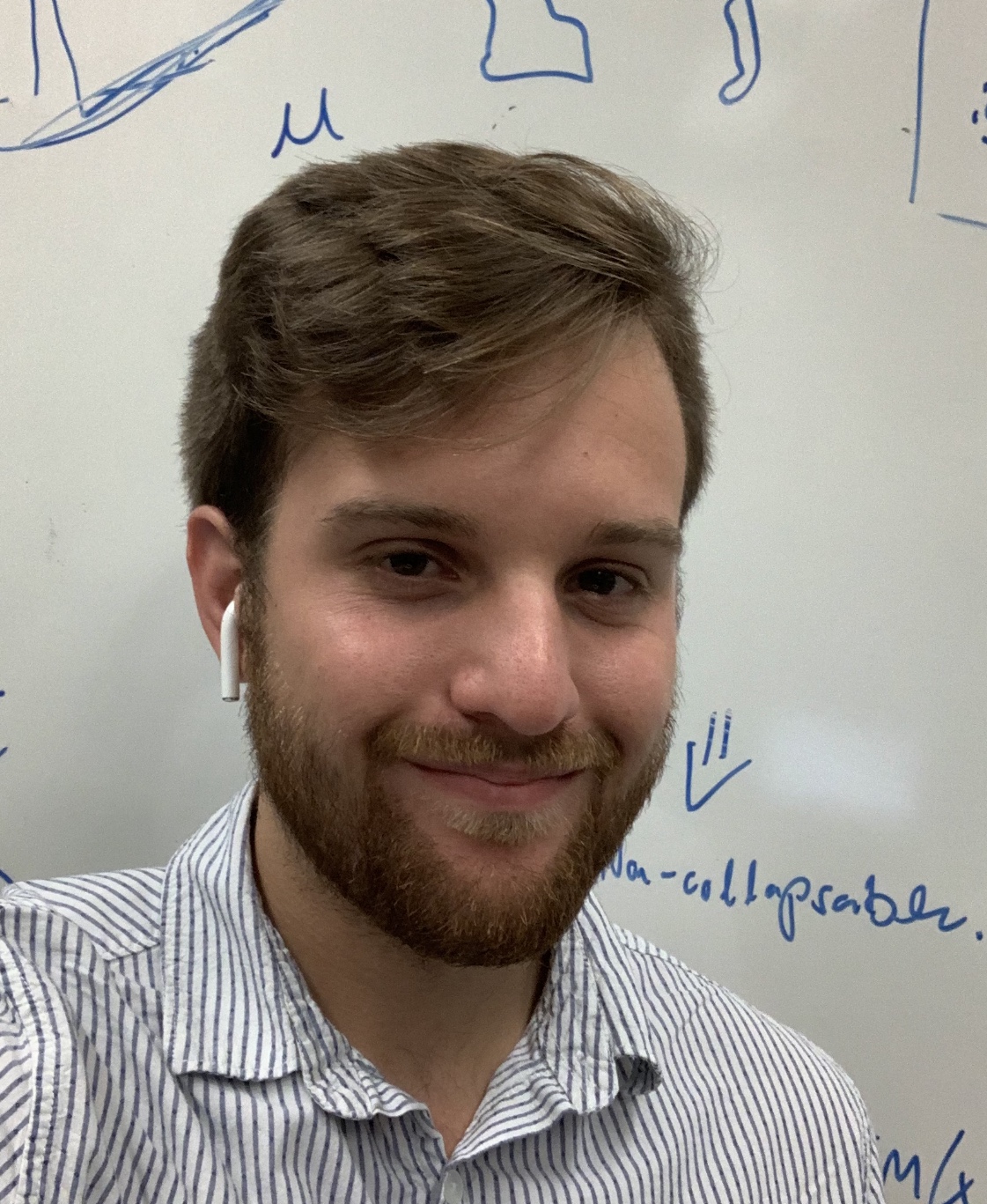}}]{Jose Bouza} is a fourth year Mathematics and Computer Science undergraduate at the University of Florida. His primary interests encompass computer vision and applied topology. 
\end{IEEEbiography}
\vskip -2\baselineskip plus -1fil
\begin{IEEEbiography}
[{\includegraphics[width=1in,height=1.25in,clip,keepaspectratio]{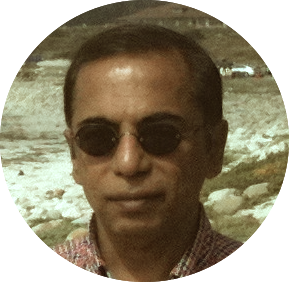}}]{Baba C. Vemuri}
received his PhD in Electrical and Computer Engineering from the University of Texas at Austin. Currently, he holds the Wilson and Marie Collins professorship in Engineering at the University of Florida and is a full professor in the Department of Computer and Information Sciences and Engineering. His research interests include Statistical Analysis of Manifold-valued Data, Medical Image Computing, Computer Vision and Machine Learning.  He is a recipient of the IEEE Technical Achievement Award (2017) and is a Fellow of the IEEE (2001) and the ACM (2009). 
\end{IEEEbiography}

\end{document}